\definecolor{mydarkblue}{rgb}{0,0.08,0.45}
\definecolor{airforceblue}{rgb}{0.36, 0.54, 0.66}
\newcommand{\R}{\mathbb{R}}
\newcommand{\E}{\mathbb{E}}
\newcommand{\T}{\mathcal{T}}
\newcommand{\X}{\mathbb{X}}
\newcommand{\Y}{\mathbb{Y}}
\newcommand{\Pp}{\mathbb{P}}
\newcommand\norm[1]{\left\lVert#1\right\rVert}
\newcommand{\charac}[1]{\mathbbm{1}_{#1}}
\newcommand{\maxcov}{\emph{Max Coverage}}
\newcommand{\pcov}{\emph{Max Probability Cover}}
\newcommand{\probcover}{\emph{ProbCover}}
\newcommand{\badge}{\textbf{\emph{BADGE}}}
\newcommand{\dbal}{\textbf{\emph{DBAL}}}
\newcommand{\coreset}{\emph{Coreset}}
\newcommand{\typiclust}{\emph{TypiClust}}
\definecolor{orange}{rgb}{.95, 0.75, 0.0}
\definecolor{cinnamon}{rgb}{0.82, 0.41, 0.12}
\definecolor{teal}{rgb}{0.0, 0.8, 0.8}
\definecolor{pink}{rgb}{1.0, 0.4, 0.4}
\definecolor{darkpastelgreen}{rgb}{0.01, 0.75, 0.24}	        
\newcommand{\myparagraph}[1]{\smallskip\noindent\textbf{#1}}
\newcommand{\commentText}[1]{}
\newcommand{%
    
    \import{./figures/}{.pdf_tex}
}[1]{%
    
    \import{./figures/}{#1.pdf_tex}
}
\theoremstyle{definition}
\newtheorem{definition}{Definition}[section]
\newtheorem{lemma}{Lemma}
\newtheorem*{lemma*}{Lemma}
\newtheorem*{proposition}{Proposition}
\newtheorem{theorem}{Theorem}
\newtheorem*{theorem*}{Theorem}
\newtheorem{corollary}{Corollary}
\newtheorem*{corollary*}{Corollary}
\title{Active Learning Through a Covering Lens}
\author{Ofer Yehuda\textsuperscript{\ensuremath\dagger},  Avihu Dekel\textsuperscript{\ensuremath\dagger}, Guy Hacohen\textsuperscript{\ensuremath\dagger}\textsuperscript{\ensuremath\ddagger}, Daphna Weinshall\textsuperscript{\ensuremath\dagger}\\
  School of Computer Science \& Engineering\textsuperscript{\ensuremath\dagger} \\ Edmond and Lily Safra Center for Brain Sciences\textsuperscript{\ensuremath\ddagger}\\
  The Hebrew University of Jerusalem \\  Jerusalem 91904, Israel\\
  {\tt\small \{ofer.yehuda,avihu.dekel,guy.hacohen,daphna\}@mail.huji.ac.il}\\}
\begin{document}

\maketitle
\begin{abstract}
Deep active learning aims to reduce the annotation cost 
for the training of deep models, which is notoriously data-hungry. 
Until recently, deep active learning methods were ineffectual in the \emph{low-budget} regime, where only a small number of examples are annotated. The situation has been alleviated by recent advances in representation and self-supervised learning, which impart the geometry of the data representation with rich information about the points. Taking advantage of this progress, we study the problem of subset selection for annotation through a “covering” lens, 
proposing  \probcover\ -- a new active learning algorithm for the low budget regime, which seeks to maximize \emph{Probability Coverage}. We then describe a dual way to view the proposed formulation, from which one can derive strategies suitable for the high budget regime of active learning, related to existing methods like \coreset. We conclude with extensive experiments, evaluating \probcover\  in the low-budget regime. We show that our principled active learning strategy improves the state-of-the-art in the low-budget regime in several image recognition benchmarks. This method is especially beneficial in the semi-supervised setting, allowing state-of-the-art semi-supervised methods to match the performance of fully supervised methods, while using much fewer labels nonetheless. Code is available at \href{https://github.com/avihu111/TypiClust}{https://github.com/avihu111/TypiClust}.
\end{abstract}

\section{Introduction}
\label{sec:intro}

For the most part, deep learning technology critically depends on access to large amounts of annotated data. Yet annotations are costly and remain so even in our era of \emph{Big Data}. Deep active learning (AL) aims to alleviate this problem by improving the utility of the annotated data. Specifically, given a fixed budget $b$ of examples that can be annotated, and some deep learner, AL algorithms aim to query those $b$ examples that will most benefit this learner. 

In order to optimally choose unlabeled examples to be annotated, most deep AL strategies follow some combination of two main principles: 1) Uncertainty sampling \citep[e.g.,][]{kirsch2019batchbald,yoo2019learning,beluch2018power,gal2016dropout,gal2017deep,ranganathan2017deep,joshi2009multi}, in which examples that the learner is most uncertain about are picked, to maximize the added value of the new annotations. 2) Diversity Sampling \citep[e.g.,][]{DBLP:conf/iclr/AshZK0A20,hong2020deep,gao2020consistency,shui2020deep,gissin2019discriminative,sinha2019variational,sener2018active,geifman2017deep,yin2017deep,wang2016batch,yang2015multi}, in which examples are chosen from diverse regions of the data distribution, to represent it wholly and reduce redundancy in the annotation.

Most AL methods fail to improve over random selection when the annotation budget is very small \citep{pourahmadi2021simple,simeoni2021rethinking,chandra2021initial,mittal2019parting,DBLP:journals/tkde/ZhuLHWGLC20,he2019towards,attenberg2010label}, a phenomenon sometimes termed "cold start" \citep{chen2022making,yu2022cold,gao2020consistency,DBLP:conf/emnlp/YuanLB20, houlsby2014cold}. When the budget contains only a few examples, they struggle to improve the model's performance, and even fail to reach the accuracy of the random baseline.  Recently, it was shown that uncertainty sampling is inherently unsuited for the low-budget regime, which may explain the cold start phenomenon \citep{hacohen2022active}. 
The low-budget scenario is relevant in many applications, especially those requiring an expert tagger whose time is expensive (e.g., a radiologist tagger for tumor detection). If we want to expand deep learning to new domains, overcoming the cold start problem is an ever-important task.
\begin{figure}
\center
\begin{subfigure}{.3\textwidth}
\vspace{0.42cm}
\includegraphics[width=\linewidth]{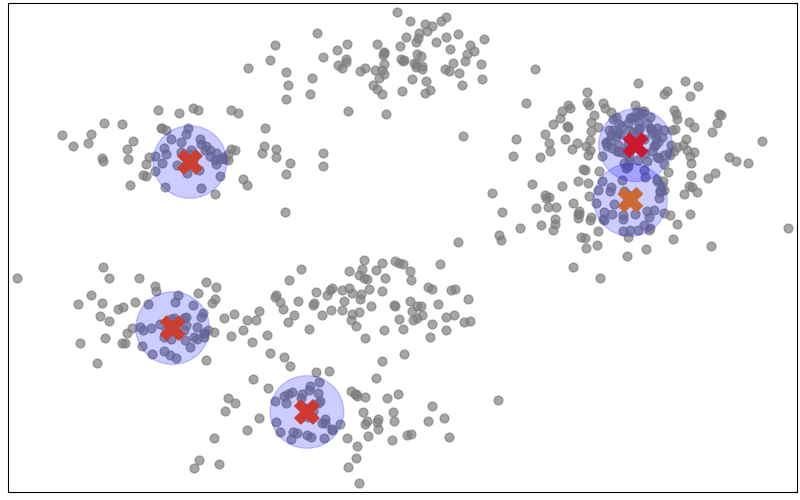} \vspace{0.03cm} \\
\includegraphics[width=\linewidth]{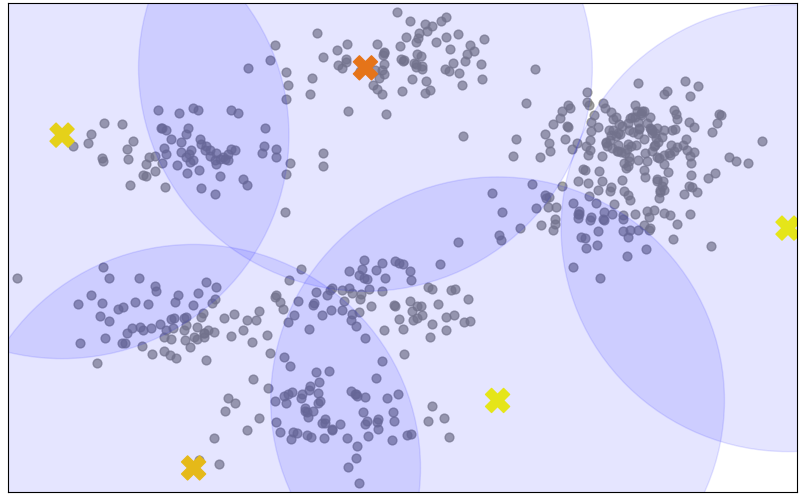}
\caption{5 Samples}
\end{subfigure}
\hspace{0.1cm}
\begin{subfigure}{.3\textwidth}
\begin{center}\small{ \probcover\  selection}\end{center}
\includegraphics[width=\linewidth]{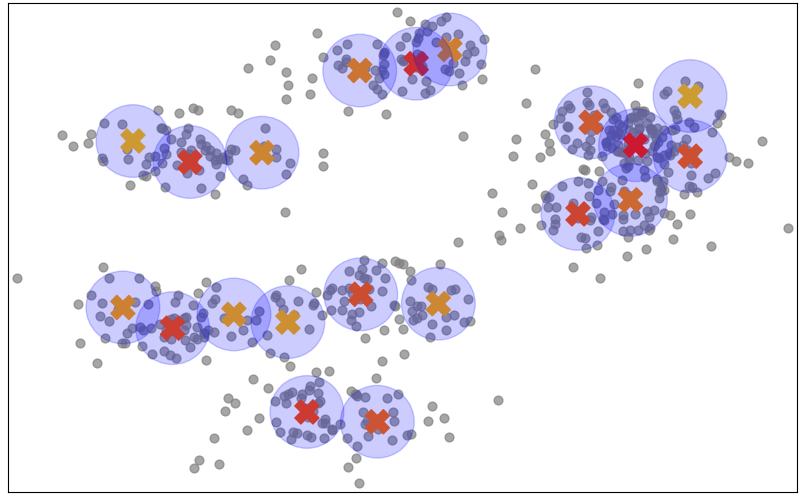}
\vspace{-0.6cm}\begin{center}\small{ \coreset\  selection}\end{center}
\includegraphics[width=\linewidth]{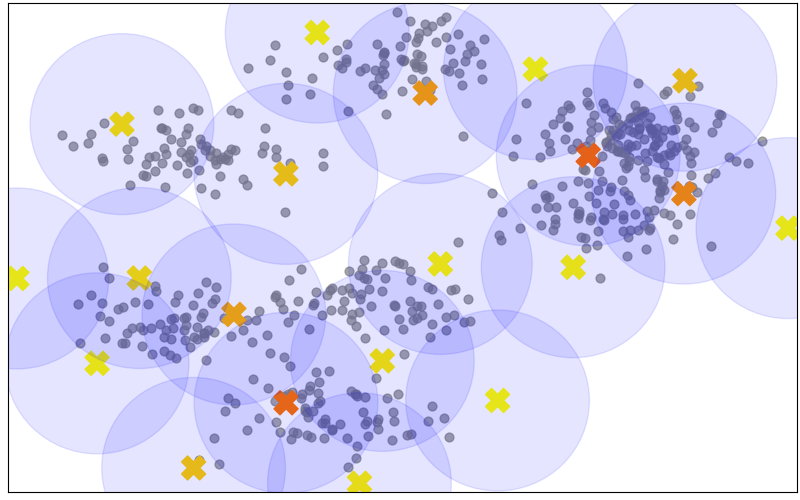}
\caption{20 Samples}
\end{subfigure}
\hspace{0.1cm}
\begin{subfigure}{.3\textwidth}
\vspace{0.41cm}
\includegraphics[width=\linewidth]{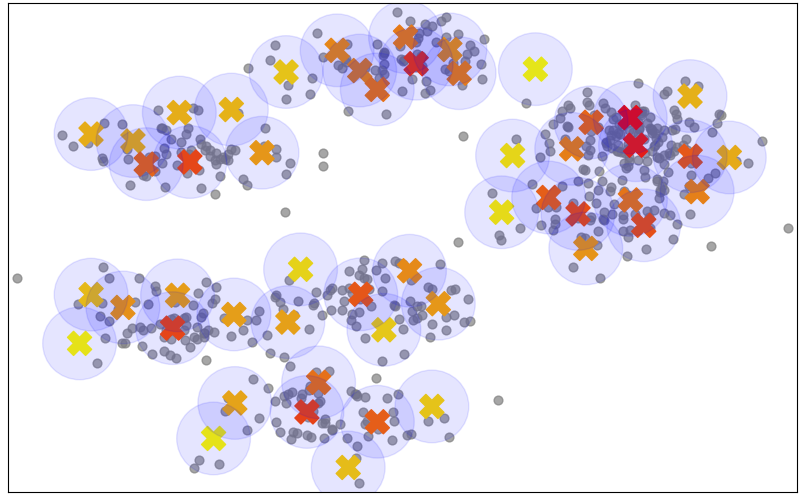}\vspace{0.43cm} \\
\includegraphics[width=\linewidth]{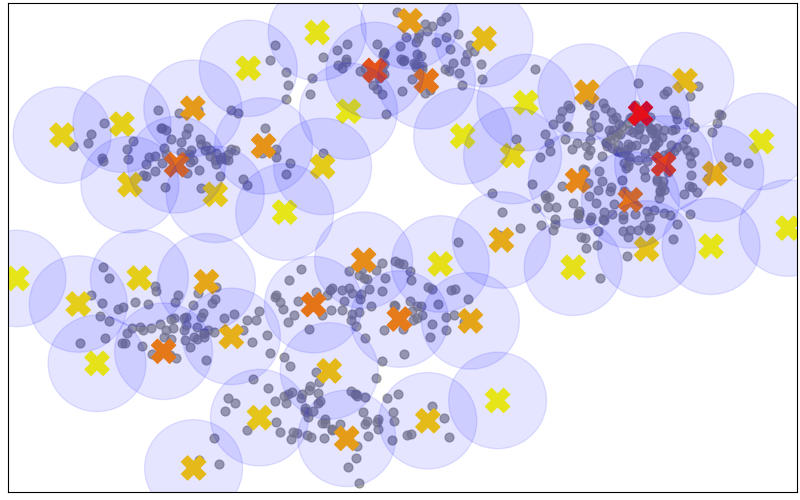}
\caption{50 Samples}
\end{subfigure}
\hspace{0.1cm}
\begin{subfigure}{.035\textwidth}
\includegraphics[width=\linewidth]{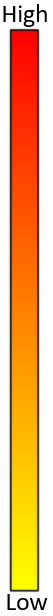}
\end{subfigure}
\caption[]{\probcover\ selection (top) vs \coreset\ selection (bottom) of 5/20/50 samples (out of 600). Selected points are marked by $ \footnotesize\textbf{x}$, which is color-coded by density (see color code bar to the right). Density is measured using Gaussian Kernel Density Estimation, and the covered area is marked in light blue. \coreset\ attempts to minimize ball size, constrained by complete coverage, while \probcover\ attempts to maximize coverage, constrained by a fixed ball size. Note that especially in low budgets, such as 5 samples, \coreset\ only selects outliers of the distribution (yellow), while \probcover\ selects from dense regions of the distribution (red).}
\label{fig:method_vis}
\vspace{-.4cm}
\end{figure}

In this work, we focus on understanding the very low budget regime of AL, where the budget of $b$ examples cannot dependably represent the annotated data distribution. To face up to this challenge, in Sections~\ref{sec:bounding_generalization_error}-\ref{sec:prob_cover} we model the problem as \pcov, defined as follows: given some data distribution, and a radius $\delta$, select the $b$ examples that maximize the probability of the union of balls with radius $\delta$ around each example. We further show that under a separation assumption that is realistic in semantic embedding spaces, \pcov\ is befitting the nearest-neighbor classification model, in that it minimizes an upper bound on its generalization error.

In Section~\ref{sec:coreset_duality} we show a connection with existing deep AL methods, like \coreset\ \citep{sener2018active}, and explain why those methods are more suitable for the high-budget regime than the low-budget regime. This phenomenon is visualized in Fig.~\ref{fig:method_vis}, where we see that with only a few examples to choose, \coreset\ -- an AL strategy that employs the principle of diversity sampling -- chooses distant and often abnormal points, while \probcover\ chooses representative examples.

When using the empirical data distribution, we further show that \pcov\ can be reduced to \maxcov\ -- a known classical NP-hard problem \citep{nemhauser1978analysis} (see Section~\ref{sec:prob_cover}). To obtain a practical AL strategy, in Section~\ref{sec:method} we adapt a greedy algorithm for the selection of $b$ examples from a fixed finite training set of unlabeled examples (the training set), which guarantees $1-\frac{1}{e}$ approximation to the problem. We call this new method \probcover.

In Section~\ref{sec:empirical_results} we empirically evaluate the performance of \probcover\ on several computer vision datasets, including CIFAR-10, CIFAR-100, Tiny-ImageNet, ImageNet and its subsets. \probcover\ is thus shown to significantly outperform all alternative deep AL methods in the very low-budget regime. Additionally, \probcover\ improves the performance of state-of-the-art semi-supervised methods, which were thought until recently to make AL redundant \citep{chan2021marginal}, allowing for the learning of computer vision tasks with very few annotated examples.

\myparagraph{Relation to prior art}
Recent work investigated AL methods based on an approximation to a \emph{facility location problem} \citep{wen2022active,mahmood2021low,zhdanov2019diverse,sener2018active,wei2015submodularity}, which is a variant of the covering problem. In the \emph{minimax facility location problem} \citep{farahani2009facility}, the entire distribution is covered with a fixed number of balls, which can vary in size, whereas in \probcover\ the size of the balls is fixed, and we are allowed to cover only part of the total distribution. While this difference may seem minor, in the low-budget regime, when the budget is not large enough to represent the data, examples chosen by the facility location problem are not representative (as illustrated in Fig.~\ref{fig:method_vis}), which leads to poor performance (as shown in Fig.~\ref{fig:coreset_maxcov_comparison}).

\paragraph{Summary of contribution}
\begin{enumerate}[label=(\roman*)]

    \item Develop a theoretical framework to analyze AL strategies in embedding spaces, with "dual" low and high-budget interpretations.
    \item Introduce \probcover, a low-budget AL strategy motivated by our framework, which significantly outperforms other methods in the low-budget regime. 
    \item Demonstrate the outstanding competence of \probcover\ in semi-supervised learning with very few labeled examples.
\end{enumerate}

\section{Theoretical Analysis}
\label{sec:theory}

To address the challenge of active learning in low budgets, we adopt a point coverage framework. Computationally, we analyze the generalization of the Nearest Neighbor (NN) classification model, as this model depends exclusively on distances from a set of training examples and does not involve any additional inductive bias. Thus in Section~\ref{sec:bounding_generalization_error} we develop a bound on the generalization error in 1-NN models. It follows from the analysis (see discussion below) that if full coverage is required, which is only practical in the high budget regime, the minimization of this bound translates to the \emph{optimal minimax facility location} problem, which is known to be NP-hard and which the AL \coreset\ algorithm by \citet{sener2018active} is designed to approximate. 

In contrast, in the low budget regime, the aforementioned bound can best be optimized by seeking a labeled set $L$ whose probability of covering the unlabeled set is maximal. In Section~\ref{sec:prob_cover} we show that this problem is also NP-hard. Furthermore, when the data distribution is not known and is being approximated by the empirical distribution, we show that it is equivalent to the classical \maxcov\  problem. \probcover\, described in Section~\ref{sec:method}, is designed to solve this problem. In Section~\ref{sec:coreset_duality} we discuss a sense in which the high budget and low budget problems are dual. 

\subsection{Bounding the Generalization Error}
\label{sec:bounding_generalization_error}

We shall now derive a bound on the generalization error of the 1 Nearest Neighbor (1-NN) classifier. We start with some necessary notations and definitions. Most important is the assumption of $\delta$-purity, which states that most of the time, points that are less than $\delta$ apart have the same label. We then prove a lemma, showing that given a labeled set $L$ and the coverage it achieves, and given the $\delta$-purity assumption, the probability of a point being inside this cover and still being falsely labeled is small. From this, we finally derive a bound on the generalization error, which is stated in Thm.~\ref{thm:bound}.

\myparagraph{Notations}
Let $\X $ denote the input domain whose underlying probability function is denoted $P$, and let $\Y=[k]$ denote the target domain. Assume that a true labeling function $f:\X \to \Y$ exists. 
Let $X=\left\{x_i\right\}_{i=1}^m$ denote an unlabeled set of points, and $b\le m$ the annotation budget. Let $L\subseteq X$ denote the labeled set, where $\left|L\right|=b$. Let $B_{\delta}\left(x\right)=\left\{ x':\norm{x'-x}_{2}\le\delta\right\}$ denote a ball centered at $x$ of radius $\delta$. Let $C\equiv C(L,\delta)=\bigcup_{x\in L} B_\delta(x)$ denote the region covered by $\delta$-balls centered at the labeled examples in $L$. We call $C(L,\delta)$ the \emph{covered region} and $P\left(C\right)$ the \emph{coverage}.
\medskip
\begin{definition}
We say that a ball $B_\delta(x)$ is \emph{pure} if $\forall x'\in B_\delta(x): f(x')=f(x)$. 
\end{definition}

\begin{definition}
We define the \emph{purity} of $\delta$ as 
$$\pi\left(\delta\right)=P\left(\left\{ x:B_{\delta}\left(x\right)\text{ is pure}\right\} \right).$$ 
Notice that $\pi(\delta)$ is monotonically decreasing.
\end{definition}

Let $\hat f$ denote the 1-NN classifier based on $L$. We split the covered region $C(L,\delta)$ into two sets: 
\[C_{right}=\{x\in C\colon \hat f(x)=f(x)\},\qquad C_{wrong}=C\setminus C_{right}.\] 
\begin{lemma}
$C_{wrong}\subseteq\{x:B_\delta(x)\text{ is not pure}\}$.
\end{lemma}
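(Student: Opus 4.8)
The plan is to prove the inclusion directly, by unfolding definitions: I will take an arbitrary $x \in C_{wrong}$ and exhibit a point inside $B_\delta(x)$ whose true label differs from $f(x)$, which is exactly what it means for $B_\delta(x)$ to fail the purity condition. So fix such an $x$. By definition of $C_{wrong} = C \setminus C_{right}$, two facts hold at once: first, $x \in C = C(L,\delta)$, so $x$ lies in at least one of the covering balls; and second, the $1$-NN prediction is wrong, i.e. $\hat f(x) \neq f(x)$.

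Next I would bring in the nearest neighbor. Let $c^\star \in L$ denote the labeled point nearest to $x$, so that by the definition of the $1$-NN rule $\hat f(x) = f(c^\star)$ (the prediction at $x$ is the annotation of its nearest labeled point, which equals the true label $f(c^\star)$). The crucial step is to show $c^\star \in B_\delta(x)$. Since $x \in C$, there exists some $c \in L$ with $x \in B_\delta(c)$, i.e. $\norm{x - c} \le \delta$; because $c^\star$ is the \emph{nearest} labeled point, $\norm{x - c^\star} \le \norm{x - c} \le \delta$. Using the symmetry of the norm, $\norm{c^\star - x} = \norm{x - c^\star} \le \delta$, which is precisely the statement $c^\star \in B_\delta(x)$.

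Finally I would combine the two observations. On the one hand, $\hat f(x) \neq f(x)$ together with $\hat f(x) = f(c^\star)$ yields $f(c^\star) \neq f(x)$. On the other hand, $c^\star \in B_\delta(x)$. Thus $B_\delta(x)$ contains a point, namely $c^\star$, at which the labeling function disagrees with $f(x)$, so the purity requirement $\forall x' \in B_\delta(x): f(x') = f(x)$ is violated. Hence $x$ belongs to $\{x' : B_\delta(x') \text{ is not pure}\}$, and since $x \in C_{wrong}$ was arbitrary this establishes the claimed containment.

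I do not expect a genuine obstacle here, as the argument is essentially a definition chase. The only subtlety worth flagging is the symmetry move that converts ``the nearest labeled neighbor is within $\delta$ of $x$'' into ``that neighbor lies in the ball centered at $x$.'' This depends on $B_\delta$ being defined through a symmetric metric, so that the covering membership $x \in B_\delta(c^\star)$ and the reversed membership $c^\star \in B_\delta(x)$ coincide; everything else follows immediately from the definitions of $C_{wrong}$, the $1$-NN classifier, and purity.
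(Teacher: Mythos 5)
Your proof is correct and follows essentially the same route as the paper's: identify the nearest labeled neighbor, note its true label equals the (wrong) prediction at $x$, and observe it lies within $\delta$ of $x$, violating purity. You are in fact slightly more explicit than the paper on the step that coverage only guarantees \emph{some} labeled point within $\delta$, so the \emph{nearest} one is a fortiori within $\delta$ — a welcome clarification, but not a different argument.
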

\begin{proof}

Let $x\in C_{wrong}$. Let $c\in L$ denote the nearest neighbor to $x$. Then they have the same predicted label, $\hat f(x)=\hat f(c)$, and $f(c)=\hat f(c)$ because $c$ is labeled. Since $x$ is wrongly labeled, $\hat f(x)\neq f(x)$, which implies that
$$f(c)=\hat f(c)=\hat f(x)\neq f(x).$$
Finally, since $x\in C_{wrong}\subseteq C$ is in the coverage, $d(x,c)<\delta$, which means that $c\in B_\delta(x)$ with a different label and so $B_\delta(x)$ is not pure.
\end{proof}
\begin{corollary}
$$P(C_{wrong})\leq P(\{x:B_\delta(x)\text{ is not pure}\})= 1-\pi(\delta).$$
\end{corollary}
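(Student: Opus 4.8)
The plan is to obtain the corollary as an immediate consequence of the preceding lemma together with the monotonicity of the probability measure. The lemma establishes the set inclusion $C_{wrong}\subseteq\{x:B_\delta(x)\text{ is not pure}\}$, so the only work remaining is to pass from this inclusion to an inequality between probabilities and then to rewrite the right-hand side in terms of $\pi(\delta)$.

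First I would invoke monotonicity: since $A\subseteq B$ implies $P(A)\le P(B)$ for any probability measure, the inclusion from the lemma yields directly $P(C_{wrong})\le P(\{x:B_\delta(x)\text{ is not pure}\})$. Second, I would identify the event on the right as the complement of the event $\{x:B_\delta(x)\text{ is pure}\}$, whose probability is exactly $\pi(\delta)$ by definition. Taking complements then gives $P(\{x:B_\delta(x)\text{ is not pure}\})=1-\pi(\delta)$, which completes the chain and matches the stated equality.

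The only genuine subtlety is measurability: to speak meaningfully of $P(\{x:B_\delta(x)\text{ is pure}\})$ and of its complement, one must know this set is $\D$-measurable, which is implicitly granted when $\pi(\delta)$ is introduced as a probability in the preceding definition. Beyond this bit of bookkeeping I expect no real obstacle — the argument is a one-line application of monotonicity followed by the definition of $\pi(\delta)$, so the proof should be essentially immediate once the lemma is in hand.
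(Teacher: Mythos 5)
Your proof is correct and matches the paper's (implicit) argument exactly: the corollary follows from the lemma's inclusion by monotonicity of $P$, together with the definition $\pi(\delta)=P(\{x:B_\delta(x)\text{ is pure}\})$ applied to the complement. The measurability remark is a reasonable piece of bookkeeping that the paper likewise leaves implicit.
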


\begin{theorem}
\label{thm:bound}
The generalization error of the 1-NN classifier $\hat f$ is bounded as follows
\begin{equation}
\label{eq:bound}
 \E \left[\hat f(x)\neq f(x)\right]\leq \left(1-P(C(L,\delta))\right)+(1-\pi(\delta)).
 \end{equation}
\end{theorem}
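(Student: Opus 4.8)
The plan is to recognize the generalization error as the probability of misclassification under $\D$, and then to split this probability according to whether the test point lands inside or outside the covered region $C$. By definition, $L_\D(\hat f) = \E_{x\sim\D}\left[\charac{\hat f(x)\neq f(x)}\right] = P(\hat f(x)\neq f(x))$, so the whole task reduces to bounding this single probability. First, I would decompose the misclassification event into two disjoint pieces: the misclassified points that lie inside $C$, and those that lie outside. The former are exactly the set $C_{wrong}$ introduced above, while the latter form a subset of the uncovered region $\X\setminus C$. Applying the union bound across this split gives
\[
P(\hat f(x)\neq f(x)) \le P(x\notin C) + P(C_{wrong}).
\]

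Next, I would handle the two terms separately. For the uncovered term, I would use the crude but correct bound that the probability of a misclassification outside $C$ is at most the total mass outside $C$, namely $P(x\notin C) = 1 - P(x\in C)$; no finer analysis of $\hat f$'s behavior off the cover is attempted. For the covered term, I would invoke the Corollary directly, which already gives $P(C_{wrong}) \le 1-\pi(\delta)$ through the Lemma and the definition of purity. Summing the two bounds yields exactly Eq.~\eqref{eq:bound}.

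I do not expect any genuine obstacle here, since the Lemma and its Corollary have already done the substantive work of controlling $C_{wrong}$. The only subtlety worth flagging is that the argument deliberately discards all information about $\hat f$ on the uncovered region, charging every uncovered point as a potential error; this is what produces the clean separation into a coverage term $1-P(x\in C)$ and a purity term $1-\pi(\delta)$. This very decomposition is what motivates the downstream strategy: in the low-budget regime the budget is too small to shrink the coverage gap uniformly, so one instead focuses on driving $1-P(x\in C)$ down by maximizing $P(C)$ at a fixed $\delta$.
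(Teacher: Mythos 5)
Your proposal is correct and follows essentially the same route as the paper: split the misclassification event by membership in $C$, charge the full mass $1-P(x\in C)$ to the uncovered region, and bound the covered contribution by $P(C_{wrong})\le 1-\pi(\delta)$ via the Corollary. No gaps.
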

\begin{proof}
\begin{align*}
\E \left[\hat f(x)\neq f(x)\right] &= \E[\charac{f(x)\neq \hat f(x)}\charac{x\notin C}] +\E[\charac{f(x)\neq \hat f(x)}\charac{x\in C}]\\
&\leq P(x\notin C) +\E[\charac{f\neq \hat f}\charac{x\in C_{right}}]+\E[\charac{f(x)\neq \hat f(x)}\charac{x\in C_{wrong}}]\\
&\leq P(x\notin C) +0+P(x\in C_{wrong})\\
&\leq \left(1-P(C(L,\delta))\right)+(1-\pi(\delta)). \tag*{\qedhere}
\end{align*}
\end{proof}

Note that (\ref{thm:bound}) gives us a different bound for different $\delta$ values, which also depends on the labeled set $L$.  This bound introduces a trade-off: as $\delta$ increases, the \emph{coverage} increases, but the \emph{purity} decreases. Ideally, we should seek a pair $\{\delta, L\}$ that achieves the tightest bound. 

\paragraph{Discussion}
We can interpret (\ref{eq:bound}) in the context of two boundary conditions of AL: high-budget and low-budget. In the high-budget regime, achieving full coverage $P(C)=1$ is feasible as we have many points, and the remaining challenge is to reduce $1-\pi(\delta)$. Accordingly, since $\pi(\delta)$ is monotonically decreasing, we seek to minimize $\delta$ subject to the constraint $P(C)=1$. This is similar to \coreset\ \citep{sener2018active}. In the low-budget regime, full coverage entails very low purity, which (if sufficiently low) makes the bound trivially 1. Thus, instead of insisting on full coverage, we fix a $\delta$ that yields "large enough" purity $\pi(\delta)>0$, and then seek a labeled set $L$ that maximizes the coverage $P(C)$. We call this problem \pcov.

\subsection{Max Probability Cover}
\label{sec:prob_cover}

\begin{definition}[\pcov]
Fix $\delta>0$, and obtain a subset $L\subset X,\, |L|=b$, that maximizes the probability of the covered area $P(C(L,\delta))$
\begin{equation}
\vspace{-.2cm}
\label{eq:prob-cover}
\underset{L\subseteq X; |L|=b}{\text{argmax}} P(\bigcup_{x\in L}B_\delta(x))   
\end{equation}
\end{definition}
An optimal solution to (\ref{eq:prob-cover}) would minimize the bound in (\ref{eq:bound}), when $\delta$ is fixed.

Unfortunately, when moving to practical settings there are two obstacles. The first is complexity:
\begin{theorem}
\label{thm:np-hard}
\pcov\ is NP-hard.
\end{theorem}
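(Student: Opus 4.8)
The plan is to prove NP-hardness by a polynomial-time reduction from \maxcov, which the paper already notes to be NP-hard \citep{nemhauser1978analysis}. Since \pcov\ is a \emph{geometric} problem over Euclidean space while \maxcov\ is purely combinatorial, the heart of the reduction is to realize an arbitrary set system as a configuration of points together with balls of a single fixed radius $\delta$, so that ball-membership coincides exactly with set-membership.

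Given a \maxcov\ instance with universe $U=\{u_1,\dots,u_N\}$, sets $S_1,\dots,S_n\subseteq U$, and budget $b$ (assume without loss of generality that every $u_i$ lies in some $S_j$ and that no two sets coincide), I would construct points as follows. For each element $u_i$ place an \emph{element point} $e_i=\beta\,\mathbf{b}_i$, where $\mathbf{b}_i$ is the $i$-th standard basis vector. For each set $S_j$ place a \emph{center point} $c_j=\alpha\,\mathbf{s}_j+\gamma_j\,\mathbf{p}_j$, where $\mathbf{s}_j\in\{0,1\}^N$ is the indicator vector of $S_j$, each $\mathbf{p}_j$ is a fresh private coordinate orthogonal to all the $\mathbf{b}_i$ and to one another, and $\gamma_j$ is chosen so that $\norm{c_j}^2=\alpha^2 s_{\max}$ is the \emph{same} for every $j$ (with $s_{\max}=\max_j|S_j|$). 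Take $X$ to be all the $c_j$ and $e_i$, let $\D$ place mass $1/N$ on each element point and $0$ on each center, and keep the budget $b$. Because the center norms are equalized and the private coordinates are orthogonal to every $e_i$, the distance $\norm{c_j-e_i}_2^2$ takes exactly one of two values --- a smaller one when $u_i\in S_j$ and a larger one otherwise --- while $\norm{e_i-e_{i'}}_2^2=2\beta^2$. By taking $\beta$ large enough relative to $\alpha\sqrt{s_{\max}}$, one can pick a single radius $\delta$ that simultaneously places $e_i$ in $B_\delta(c_j)$ iff $u_i\in S_j$ and keeps all element points pairwise farther than $\delta$, so that selecting an element point as a center covers only itself. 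With $\D$ supported on the element points, $P(\bigcup_{x\in L}B_\delta(x))$ then equals $1/N$ times the number of elements covered by the chosen sets.

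To finish, I would argue that the \pcov\ optimum equals the \maxcov\ optimum divided by $N$. One direction is immediate: selecting the centers of an optimal family yields coverage $\mathrm{opt}/N$. For the converse, given any feasible $L$, each element point in $L$ covers exactly mass $1/N$ and can be swapped for a center $c_j$ with $u_i\in S_j$ without decreasing coverage (coverage is monotone under enlarging the covered set), so we may assume $L$ consists of centers, whence its coverage is $|\bigcup_{c_j\in L}S_j|/N\le\mathrm{opt}/N$. Hence deciding whether \pcov\ achieves coverage $\ge k/N$ decides whether \maxcov\ achieves $\ge k$, and since the number of points, the dimension, and all coordinates are polynomial in the input, the reduction is polynomial.

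The step I expect to be the main obstacle is precisely the geometric realization: a generic set system need not be a ``ball-intersection'' system for \emph{any} single radius, and naive encodings fail because sets of different cardinality induce different center--element distances. The size-equalizing private coordinates that force all $\norm{c_j}$ equal are the device that collapses the distance into a two-valued function, and the one quantitative check that must be done carefully is tuning the scale relation between $\beta$ and $\alpha\sqrt{s_{\max}}$ so that the same $\delta$ also isolates the element points from each other. A minor technical point is that I assign $\D$ zero mass to the centers, so one should read \pcov\ here as the optimization problem ``given $X$, $\D$, $\delta$, $b$, maximize $P(\bigcup_{x\in L}B_\delta(x))$''; alternatively, replicating each element point many times keeps $\D$ (nearly) uniform and the same argument goes through. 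This construction is the geometric incarnation of the equivalence with \maxcov\ under the empirical distribution that the paper exploits later.
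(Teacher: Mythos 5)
Your reduction is correct in substance, and it reaches the same destination as the paper by a genuinely different geometric route. The paper's key device is inversion in the unit sphere: the halfspaces $\{x\cdot e_i>1\}$ invert to the balls $B_{1/2}(\tfrac12 e_i)$, and each element $u$ is embedded as $\iota\bigl(\sum_i 2\,\charac{u\in S_i}e_i\bigr)$, which lands in exactly the balls indexed by the sets containing $u$; the candidate pool $X$ consists only of the $m$ ball centers, and the distribution is supported on the embedded elements. You instead realize the set system linearly: centers are scaled indicator vectors padded with a private coordinate that equalizes all norms, so that $\norm{c_j-e_i}_2^2=\alpha^2 s_{\max}+\beta^2-2\alpha\beta\,\charac{u_i\in S_j}$ is two-valued and a single $\delta$ separates the two cases while also isolating the element points from one another. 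Your construction is more elementary (no inversion, and the radius choice is transparent from the polarization identity), and it realizes only the $N$ intersection patterns actually needed rather than the full $2^m$ Venn-diagram property the paper's lemma establishes; the paper's construction buys rational coordinates for free and a smaller ambient dimension ($m$ versus your $N+n$), and it sidesteps your swap argument by simply not putting the element points into $X$.

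Two technical points you should tighten. First, $\gamma_j=\alpha\sqrt{s_{\max}-|S_j|}$ is irrational in general, so as written the instance is not exactly representable; this is easily repaired by spending four private coordinates per set with integer entries whose squares sum to $s_{\max}-|S_j|$ (findable in polynomial time since these quantities are at most $N$), or by specifying the instance in terms of squared distances and $\delta^2$. Second, in the swap step, the replacement center $c_j$ with $u_i\in S_j$ may already lie in $L$; in that case $e_i$ contributes nothing to coverage (it is already covered) and can be exchanged for any unused center, with the degenerate case $b\ge n$ being trivial for \maxcov. Neither issue affects the validity of the approach.
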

\begin{proof} (Sketch, see full proof in App.~\ref{app:np_hard_probcover})
We construct a reduction from an established NP-hard problem (\maxcov, see Def.~\ref{def:maxcov}) to \pcov. For the collection of subsets $S=\{S_1,\dots ,S_m\}$, we consider the space $\mathbb R^m$ and a collection of $\delta$-balls $\{B_\delta(x_i)\}_{i=1}^m$ with the \emph{exhaustive intersection} property. This means that any subset of the balls has at least one point that is contained in all the balls in the subset, but not contained in any other ball (see example in Fig.~\ref{fig:exahustive_intersection}). The existence of such a collection of balls in $\R^m$,  $\forall m$, is proved in Lemma~\ref{lem:2} (see App.~\ref{app:np_hard_probcover}). We then assign each $S_i$ to $B_\delta(x_i)$, and each element in $S_i$ is mapped to a point in the intersection of all the balls assigned to subsets that contain it. Each such point then defines a Dirac measure, the normalized sum of which determines a probability distribution on $\mathbb R^m$. The selection of $\delta$-balls that is the solution to the \pcov\ can be translated back to a selection of subsets, which is the solution to the original \maxcov\ problem.
\end{proof}

\begin{figure}[htb]
\centering
\begin{subfigure}{.24\textwidth}
    \centering
    \includegraphics[width=.95\linewidth]{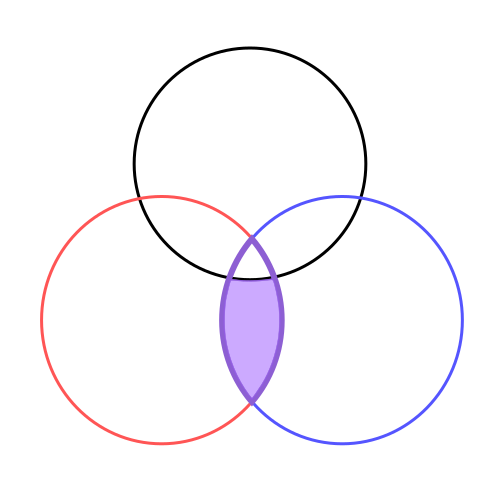}
    \caption{}
    \label{fig:venn3}
\end{subfigure}
\begin{subfigure}{.24\textwidth}
    \includegraphics[width=.95\linewidth]{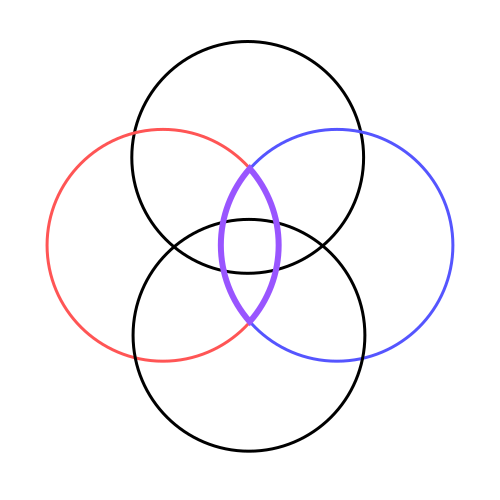}
    \caption{}
    \label{fig:venn4}
\end{subfigure}
    \caption{Illustration in $\R^2$ of \emph{exhaustive intersection} (see
Def.~\ref{def:exhaustive}). (a) With 3 balls, every subset of balls has a point that is contained \emph{only} in the specific subset. Thus this set of 3 balls has the \emph{exhaustive intersection} property. (b) With 4 balls, any point in the intersection of two opposite balls is also contained in at least one other ball. Thus this set of 4 balls does \textbf{not} have the \emph{exhaustive intersection} property. In the drawing, the region of intersection between the red and blue balls is outlined in purple, while the points within the region that are unique to this pair are marked in light purple. Note that in example (b), this set is empty.
}    
    
\label{fig:exahustive_intersection}
\end{figure}

\subsection{Using the Empirical Distribution}

When employing \pcov, the second practical problem concerns the data distribution, which is hardly ever known apriori. In fact, even when known, the subsequent probabilistic computations are often intractable and hard to approximate. Instead, we may use the \emph{empirical distribution} 
$\Tilde{P}(A)=\frac1m\sum_{i=1}^m \charac{x_i\in A}$ as an approximation, which gives us the following useful result:
\begin{proposition}
When $P$ is the empirical distribution $\Tilde{P}$, the \pcov\ objective function is equivalent to the \maxcov\ objective, with $\{B_\delta(x_i)\cap X\}_{i=1}^m$ as the collection of subsets.
\end{proposition}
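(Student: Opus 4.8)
The plan is to substitute the empirical distribution directly into the \pcov\ objective and show that, up to a fixed positive multiplicative constant, it equals the cardinality of a union of discrete sets, which is precisely the \maxcov\ objective over the prescribed collection. First I would unfold the definitions. For any candidate set $L\subseteq X$ with $|L|=b$, plugging $\tilde{P}$ into~(\ref{eq:prob-cover}) gives
$$\tilde{P}\Big(\bigcup_{x\in L}B_\delta(x)\Big)=\frac{1}{n}\sum_{j=1}^n\charac{x_j\in\bigcup_{x\in L}B_\delta(x)}.$$
The key observation, and the only computational content of the proof, is that $x_j$ lies in the union of the $\delta$-balls centered at the points of $L$ exactly when $x_j\in B_\delta(x)$ for some $x\in L$, i.e. when $x_j\in B_\delta(x)\cap X$ for some $x\in L$. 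Hence the indicator sum counts precisely the number of training points in $\bigcup_{x\in L}\big(B_\delta(x)\cap X\big)$, so that
$$\tilde{P}\Big(\bigcup_{x\in L}B_\delta(x)\Big)=\frac{1}{n}\Big|\bigcup_{x\in L}\big(B_\delta(x)\cap X\big)\Big|.$$

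Next I would note that $\frac{1}{n}$ is a positive constant independent of $L$, so the maximizers of the left-hand side over all size-$b$ subsets $L$ coincide exactly with the maximizers of $\big|\bigcup_{x\in L}(B_\delta(x)\cap X)\big|$. It then remains to identify this with \maxcov. Since $L$ ranges over size-$b$ subsets of $X=\{x_i\}_{i=1}^n$, choosing $L$ is in correspondence with choosing $b$ indices, hence with choosing $b$ members of the collection $\{B_\delta(x_i)\cap X\}_{i=1}^n$, and under this correspondence the selected union and its cardinality are preserved. Maximizing that cardinality over all size-$b$ subfamilies is, by Def.~\ref{def:maxcov}, exactly the \maxcov\ objective on the universe $X$ with this collection, which establishes the claim.

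I do not anticipate a genuine obstacle, as the argument is essentially a change of viewpoint rather than a new estimate. The only point requiring a little care is the bookkeeping in the correspondence between choosing centers $x_i$ and choosing subsets $B_\delta(x_i)\cap X$: if two distinct centers $x_i\neq x_j$ happen to induce the same discrete set $B_\delta(x_i)\cap X=B_\delta(x_j)\cap X$, the map from center-selections to subfamilies fails to be injective. This is harmless for an equivalence of optimization problems, however, since both formulations range over the same collection of size-$b$ index selections and assign each selection the same objective value; I would make this explicit so that "equivalent" is understood as identity of the feasible sets and of the objective, not merely of the optimal value.
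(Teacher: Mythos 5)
Your proof is correct and follows essentially the same route as the paper's: substitute the empirical measure, observe that it counts the training points lying in the union of $\delta$-balls, and identify that count with $\left|\bigcup_{x\in L}(B_\delta(x)\cap X)\right|$ up to the constant $\frac{1}{n}$. Your extra remark on the (possibly non-injective) correspondence between centers and subsets is a harmless refinement the paper leaves implicit.
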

\begin{proof}
Given a labeled set $L=\{x_i\}_{i=1}^b$, we show equality of objectives up to constant $\frac1{|X|}$
\begin{align*}
    P\left(\bigcup_{i=1}^b B_\delta(x_i)\right)&=P\left(\{y\in \mathbb R^d\mid \exists i\quad \|x_i-y\|<\delta\}\right) \\
    &=\frac{1}{|X|}\left|\{x\in X\mid \exists i\quad \|x_i-x\|<\delta\}\right|\\
    &=\frac{1}{|X|}\left|\bigcup_{i=1}^b(B_\delta(x_i)\cap X)\right|
\end{align*}
\vspace{-0.5cm}
\end{proof}

\subsection{The "Duality" of \pcov\ and \coreset}
\label{sec:coreset_duality}
The \coreset\  AL method by \citet{sener2018active} minimizes the objective
\[\delta(L)=\max_{x\in X}\min_{c\in L} d(x,c)=\min\{\delta\in\mathbb R_+\colon X\subseteq\bigcup_{c\in L}B_\delta(c)\}\]
We can rewrite the above in the language of distributions as
\[\delta'(L)=\min\{\delta\in\mathbb R_+\colon P(\bigcup_{c\in L}B_\delta(c))=1\}\]
If we use the empirical distribution then $\delta(L)=\delta'(L)$. In this framework we can say that \pcov\ and \coreset\  are dual problems in the following loose sense:
\begin{enumerate}[leftmargin=0.65cm]
    \item \pcov\ minimizes the generalization error bound (\ref{eq:bound}) when we fix $\delta$ and seek to maximize the coverage, which is suitable for the low budget regime.
    \item \coreset\  minimizes the generalization error bound (\ref{eq:bound}) when we fix the coverage to 1 and minimize $\delta$, which is suitable for the high budget regime because only then can we fix the coverage to 1.
\end{enumerate}
This duality is visualized in Fig.~\ref{fig:method_vis}.

\section{Method: \probcover}
\label{sec:method}

To deliver a practical method, we first note that our approach implicitly relies on the existence of a good embedding space \citep{bengar2021reducing,DBLP:conf/nips/ChenKSNH20,zhang2018unreasonable}, where distance is correlated with semantic similarity, and where similar points are likely to bunch together in high-density regions. As is now customary \citep[e.g.,][]{mahmood2021low, hacohen2022active}, we use an embedding space derived by training a self-supervised task over the large unlabeled pool. In such a space similar labels often correspond to short distances, making 1-NN classification suitable, and also providing for the existence of large enough $\delta$ balls with good purity and coverage properties. 

Secondly, we note that \maxcov\ is NP-hard and cannot be solved efficiently. Instead, as its objective is submodular and monotone \citep{krause2014submodular}, we use the greedy approximate algorithm that achieves $\left(1-\frac1e\right)$-approximation \citep{krause2014submodular}. A better approximation is impractical, as shown in App.~\ref{sec:improving_greedy}. See App.~\ref{app:complexity_of_probcover} for additional time and space complexity analysis. 

Below, we describe the greedy algorithm in Section~\ref{sec:greedy-algorithm}, and the estimation of ball size $\delta$ in Section~\ref{sec:finding_delta}.


\subsection{Greedy Algorithm}
\label{sec:greedy-algorithm}

\begin{algorithm}[htb]
\begin{algorithmic}
\caption{\probcover}
\label{alg:greedy}
    \STATE {\textbf{Input:} unlabeled pool $U$, labeled pool $L$, budget $b$, ball-size $\delta$, $\quad$ 
     \STATE \textbf{Output:} a set of points to query}
    \STATE $X$ $\leftarrow$ Embedding of representation learning algorithm on $U\cup L$
    \STATE $G=\left(V=X,E=\left\{(x,x'): x'\in B_{\delta}\left(x\right)\right\}\right)$ 
    \FORALL{$c\in L$} 
        \STATE Remove the incoming edges to covered vertices, $\left\{ \left(x',x\right)\in E:\left(c,x\right)\in E\right\}$, from $E$
    \ENDFOR
    \STATE \textit{Queries} $\leftarrow\emptyset$\;
    \FORALL{i=1,\dots,b}
        \STATE Add $c\in U$ with the highest out-degree in $G$ to \textit{Queries}
        \STATE Remove the incoming edges to covered vertices, $\left\{ \left(x',x\right)\in E:\left(c,x\right)\in E\right\}$, from $E$
    \ENDFOR
    \STATE {\bfseries return} \textit{Queries}

\end{algorithmic}
\end{algorithm}

The algorithm (see Alg.~\ref{alg:greedy} below for pseudo-code) goes as follows: First, construct a directed graph $G=(V,E)$, with $V=X$ the embedding of the data space, and $\left(x,x'\right)\in E\iff x'\in B_{\delta}\left(x\right) \iff d(x,x')\le \delta$. In $G$, each vertex represents a specific example, and there is an edge between two vertices $(x,x')$ if $x'$ is covered by the $\delta$-ball centered at $x$ (distances are measured in the embedding space). The algorithm then performs $b$ iterations of the following two steps:
\begin{enumerate}[label=(\roman*)]
    \item Pick the vertex $x_{max}$ with the highest out-degree for annotation; 
    \item Remove all incoming edges to $x_{max}$ and its neighbors. 
\end{enumerate}
As \probcover\ uses a sparse representation of the adjacency graph, it is able to scale to large datasets while requiring limited space resources. The complexity analysis of the algorithm, and specifically the complexity of constructing the \emph{adjacency graph} and of the \emph{sample selection}, are discussed in the Appendix~\ref{app:complexity_of_probcover}.

\subsection{Estimating \texorpdfstring{$\delta$}{Lg}}
\vspace{-0.1cm}
\label{sec:finding_delta}

Our algorithm requires the specification of hyper-parameter $\delta$, the ball radius, whose value depends on details of the embedding space (see App.~\ref{app:method_implementation_details} for embeddings used). In choosing $\delta$, we need to consider the trade-off between large coverage $P(C)$ and high purity $\pi(\delta)$. We resolve this trade-off with the following heuristic, where we pick the largest $\delta$ possible, while maintaining purity above a certain threshold $\alpha\in(0,1)$. Specifically,
$$ \delta^{*}=\max\left\{ \delta:\pi\left(\delta\right)\ge\alpha\right\} $$
Importantly, $\alpha$ is more intuitive to tune, and is kept constant across different datasets (unlike $\delta$). 
We still need to estimate the purity $\pi\left(\delta\right)$, which depends on the labels, from unlabeled data. To this end, we estimate purity using unsupervised representation learning and clustering. First, we cluster self-supervised features using $k$-means with $k$ equal to the number of classes. For a given $\delta$, we compute the purity $\pi(\delta)$ using the clustering labels as pseudo-labels for each example. Searching for the best $\delta$, we repeat the process and pick the largest $\delta$ so that at least $\alpha=0.95$ of the balls are pure. 

In Fig.~\ref{fig:finding_delta}, we plot the percentage of pure balls across different datasets as a function of $\delta$, where the dashed line represents the $\delta^*$ chosen by \probcover.

\begin{figure}[htb]
\vspace{-0.3cm}
\center
\begin{subfigure}{.245\textwidth}
\includegraphics[width=\linewidth]{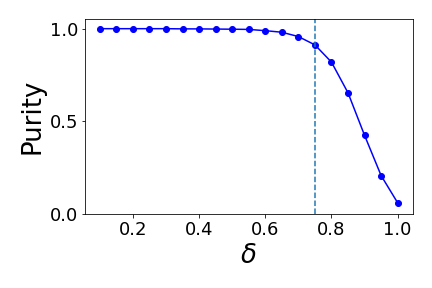}
\vspace{-0.7cm}
\caption{CIFAR-10} 
\end{subfigure}
\begin{subfigure}{.245\textwidth}
\includegraphics[width=\linewidth]{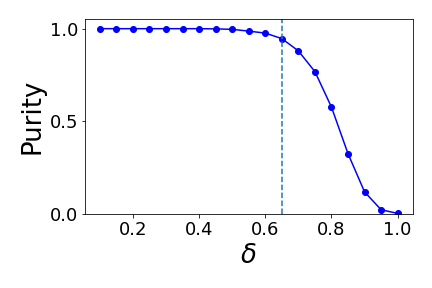}
\vspace{-0.7cm}
\caption{CIFAR-100} 
\end{subfigure}
\begin{subfigure}{.245\textwidth}
\includegraphics[width=\linewidth]{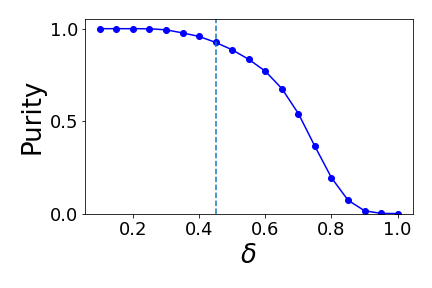}
\vspace{-0.7cm}
\caption{Tiny-ImageNet} 
\end{subfigure}
\begin{subfigure}{.245\textwidth}
\includegraphics[width=\linewidth]{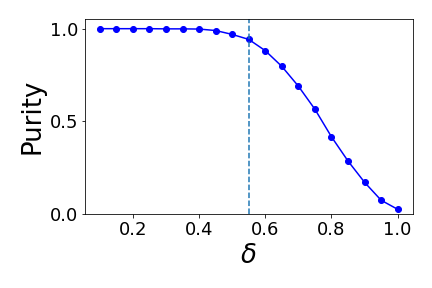}
\vspace{-0.7cm}
\caption{ImageNet} 
\end{subfigure}
\caption{Ball purity, as a function of $\delta$, estimated from the unlabeled data (see text). The dashed line marks the highest $\delta$, after which purity drops below $\alpha=0.95$.}
\label{fig:finding_delta}
\end{figure}

\section{Empirical Results}
\label{sec:empirical_results}

We report a set of empirical results, comparing \probcover\ to other AL strategies in a variety of settings. We focus on the very low budget regime, with a budget size $b$ of a similar order of magnitude as the number of classes. Note that since the data is picked from an unlabeled pool, chances are that the initial labeled set is not going to be balanced across classes, and in the early stages of training some classes will almost always be missing. \probcover's excellent performance nevertheless, as seen below, demonstrates its robustness in the presence of this hurdle.

\subsection{Methodology}
\label{sec:methodology}

Three deep AL frameworks are evaluated:
\begin{enumerate}[leftmargin=0.95cm,label={(\roman*)}]
\item \emph{Fully supervised}: train a ResNet-18 only on the annotated data, as a fully supervised task. 
\item \emph{Semi-supervised by transfer learning}: create a representation of the data by training with a self-supervised task on the unlabeled data, then construct a 1-NN classifier using the ensuing representation in a supervised manner. This framework is intended to capture the basic benefits of semi-supervised learning, regardless of the added benefits provided by modern semi-supervised learning methods and the more sophisticated derivation of pseudo-labels.  
\item \emph{Fully semi-supervised}: train a competitive semi-supervised model on both the annotated and unlabeled data. In our experiments we use 
FlexMatch by \citet{DBLP:journals/corr/abs-2110-08263}.
\end{enumerate}

In frameworks (i) and (ii) we adopt the evaluation kit created by \citet{Munjal2020TowardsRA}, in which we can compare multiple deep AL strategies in a principled way. In framework (iii), we adopt the code and hyper-parameters provided by FlexMatch. 

When evaluating frameworks (i) and (ii), we compare \probcover\ to 9 deep AL strategies as baselines.
\begin{inparaenum}[(1)] \item \emph{\textbf{Random}} query uniformly.  (2)-(4) query examples with the lowest score, using the following basic scores: \item \emph{\textbf{Uncertainty}} -- max softmax output, \item \emph{\textbf{Margin}} -- margin between the two highest softmax outputs, \item \emph{\textbf{Entropy}} -- inverse entropy of softmax outputs. \item \badge\  \citep{DBLP:conf/iclr/AshZK0A20}. \item \dbal\  \citep{gal2017deep}. \item {\textbf\typiclust}\  \citep{hacohen2022active}. \item \emph{\textbf{BALD}} \citep{kirsch2019batchbald}.
\item \emph{\textbf{W-Dist}} \citep[][]{mahmood2021low}, see also App.~\ref{app:comparison_to_mahmood}.
\item \emph{\textbf{Coreset}} \citep{sener2018active}.
\end{inparaenum}
We note that while most baseline methods are suitable for the high budget regime, \typiclust\ and \emph{W-Dist} are also suitable for the low budget regime. Similarly to \probcover, \typiclust\ requires a good embedding space to work properly. When comparing \probcover\ and \typiclust, and in order to avoid possible confounds, we use the same embedding space for both methods.

These AL methods are evaluated on the following classification datasets: CIFAR-10/100 \citep{krizhevsky2009learning}, TinyImageNet, \citep{le2015tiny}, ImageNet \citep{deng2009imagenet} and its subsets (following \citet{van2020scan}).
When considering CIFAR-10/100 and TinyImageNet, we use as input the embedding of SimCLR \citep{chen2020simple}\nocite{weinshall2008beyond} across all methods. When considering ImageNet we use as input the embedding of DINO \citep{caron2021emerging} throughout. Results on ImageNet-50/100 are deferred to App.~\ref{app:additional_empirical}. 
Details concerning specific networks and hyper-parameters can be found in App.~\ref{app:eval_impl_details}, and in the attached code in the supplementary material. When evaluating frameworks (i) and (ii), we perform $5$ active learning rounds, querying a fixed budget of $b$ examples in each round. In framework (iii), as FlexMatch is computationally demanding, we only evaluate methods on their initial pool selection capabilities.

\subsection{Main Results}
\label{sec:main_results}
\myparagraph{(i) Fully supervised framework.}
We evaluate different AL methods based on the performance of a deep neural network trained directly on the raw queried data.
In each round, we query $b$ samples where $b$ is equal to the number of classes in each dataset, and train a ResNet-18 on the accumulated queried set. We repeat this for $5$ active learning rounds, and plot the mean accuracy of $5$ repetitions (3 for ImageNet) in Fig.~\ref{fig:fixed_delta_supervised_low_budget} (see App.~\ref{app:additional_empirical} for additional results). 
\begin{figure}[htb]
\center
\begin{subfigure}{.995\textwidth}
\includegraphics[width=\linewidth]{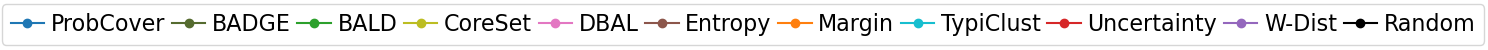}
\end{subfigure}
\\
\begin{subfigure}{.245\textwidth}
\includegraphics[width=\linewidth]{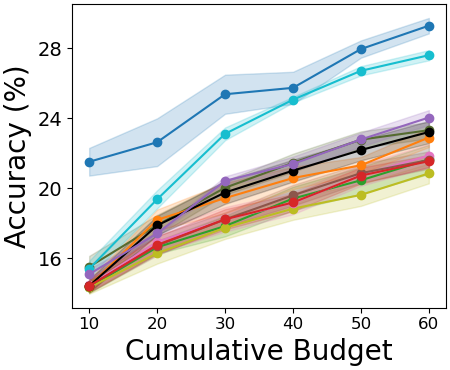}
\caption{CIFAR-10} 
\label{fig:3a}
\end{subfigure}
\begin{subfigure}{.245\textwidth}
\includegraphics[width=\linewidth]{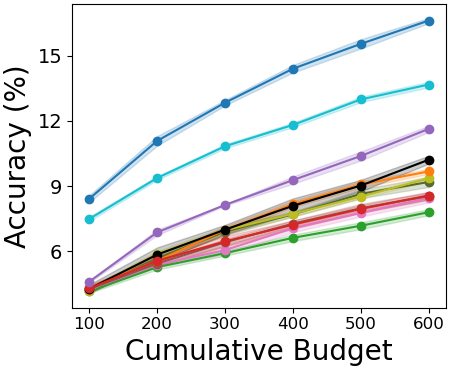}
\caption{CIFAR-100} 
\label{fig:3b}
\end{subfigure}
\begin{subfigure}{.245\textwidth}
\includegraphics[width=\linewidth]{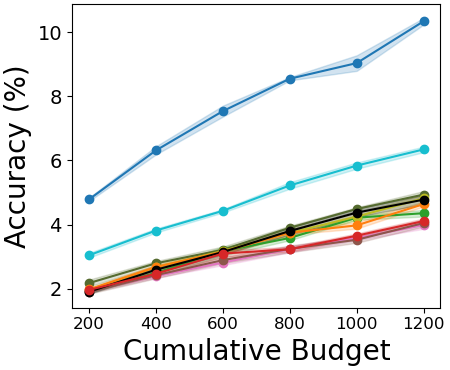}
\caption{Tiny-ImageNet} 
\end{subfigure}
\begin{subfigure}{.245\textwidth}
\includegraphics[width=\linewidth]{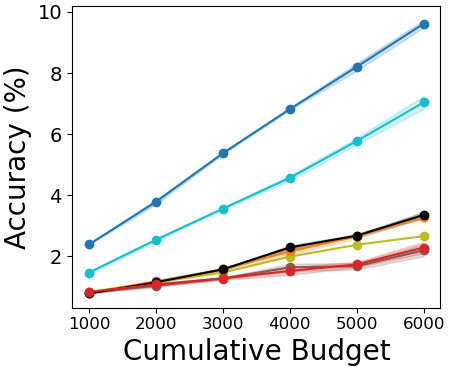}
\caption{ImageNet} 
\end{subfigure}
\caption{Framework (i), fully supervised: The performance of \probcover\ is compared with baseline AL strategies in image classification tasks in the low budget regime. Budget $b$ guarantees on average 1 sample per class, thus the initial sample may be imbalanced. The final average test accuracy in each iteration is reported, using $5$ repetitions ($3$ for ImageNet). The shaded area reflects the standard error across repetitions.}
\label{fig:fixed_delta_supervised_low_budget}
\vspace{-0.3cm}
\end{figure}

\myparagraph{(ii) Semi-supervised by transfer learning.}
In this framework, we make use of pretrained self-supervised features, and measure classification performance using the 1-NN classifier. Accordingly, each point is classified by the label of its nearest neighbor (within the selected labeled set $L$) in the self-supervised features space. 
In low budgets, this framework outperforms the fully-supervised framework (i), though it is not as effective as the full-blown semi-supervised learning framework (iii). This supports the generality of our findings, not limited to any specific semi-supervised method. Similarly to Fig.~\ref{fig:fixed_delta_supervised_low_budget}, in Fig.~\ref{fig:fixed_delta_features_low_budget} we plot the mean accuracy of $5$ repetitions for the different tasks.


\begin{figure}[htb]
\center
\begin{subfigure}{.995\textwidth}
\includegraphics[width=\linewidth]{figures/empirical/legends/wass.png}
\end{subfigure}
\\
\begin{subfigure}{.245\textwidth}
\includegraphics[width=\linewidth]{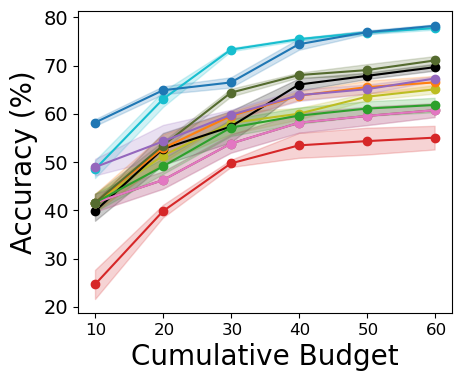}
\caption{CIFAR-10} 
\label{fig:4a}
\end{subfigure}
\begin{subfigure}{.245\textwidth}
\includegraphics[width=\linewidth]{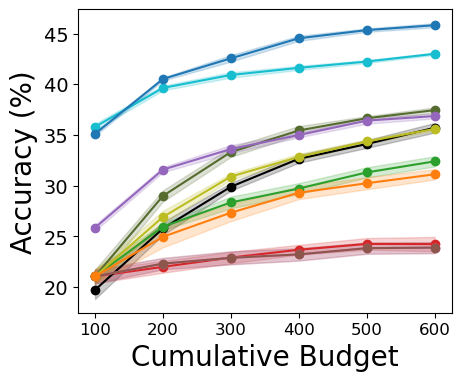}
\caption{CIFAR-100} 
\label{fig:4b}
\end{subfigure}
\begin{subfigure}{.245\textwidth}
\includegraphics[width=\linewidth]{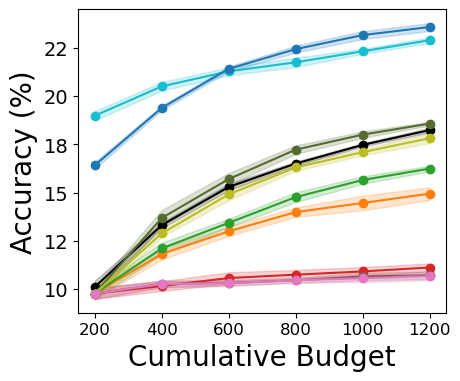}
\caption{Tiny-ImageNet} 
\end{subfigure}
\begin{subfigure}{.245\textwidth}
\includegraphics[width=\linewidth]{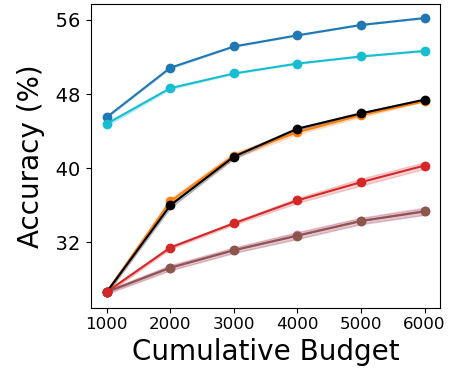}
\caption{ImageNet}
\end{subfigure}
\caption{Comparative evaluation of framework (ii) - semi-supervised by transfer learning, see caption of Fig.~\ref{fig:fixed_delta_supervised_low_budget}.}
\label{fig:fixed_delta_features_low_budget}
\end{figure}

\myparagraph{(iii) Semi-supervised framework.} 
We compare the performance of different AL strategies used prior to running FlexMatch, a state-of-the-art semi-supervised method. In Fig.~\ref{fig:flexmatch_low_budget} we show results with $3$ repetitions of FlexMatch, using the labeled sets provided by different AL strategies and budget $b$ equal to the number of classes. We see that \probcover\ outperforms random sampling and other AL baselines by a large margin. We note that in agreement with previous works \citep{chan2021marginal,hacohen2022active}, AL strategies that are suited for high budgets do not improve the results of random sampling, while AL strategies that are suited for low budgets achieve large improvements.

\begin{figure}[htb]
\center
\begin{subfigure}{.325\textwidth}
\includegraphics[width=\linewidth]{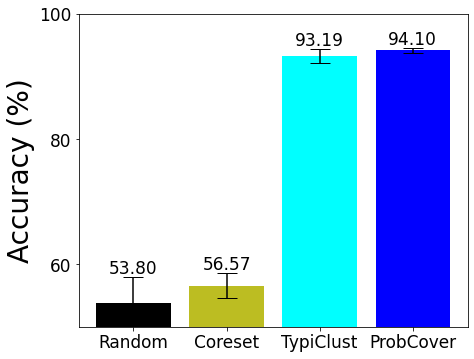}
\caption{CIFAR-10}
\label{fig:flexmatch_low_cifar10}
\end{subfigure}
\begin{subfigure}{.325\textwidth}
\includegraphics[width=\linewidth]{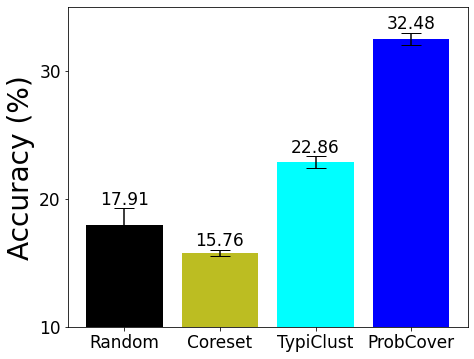}
\caption{CIFAR-100} 
\label{fig:flexmatch_low_cifar100}
\end{subfigure}
\begin{subfigure}{.325\textwidth}
\includegraphics[width=\linewidth]{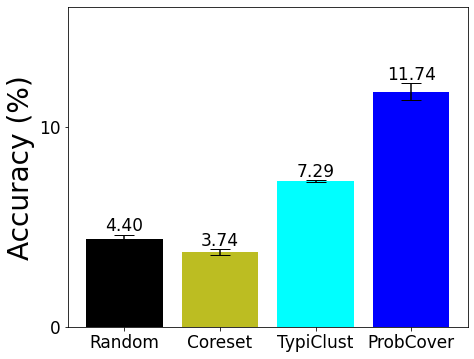}
\caption{Tiny-ImageNet}
\label{fig:flexmatch_low_tiny}
\end{subfigure}
\caption{Framework (iii) Semi-supervised: comparison of AL strategies in a semi-supervised task. Each bar shows the mean test accuracy after $3$ repetitions of FlexMatch trained using $b$ labeled examples, where $b$ is equal to the number of classes in each task. Error bars denote the standard error.}
\vspace{-0.5cm}
\label{fig:flexmatch_low_budget}
\end{figure}

\subsection{Ablation Study}
\label{sec:abaltion_study}
We report a set of ablation studies, evaluating the added value of each step of \probcover.

\myparagraph{Random initial selection}
When following the uncertainty sampling principle, as many AL methods do, a trained learner is needed. Any such method requires therefore a non-empty initial pool of labeled examples to train a rudimentary learner, from which uncertainty selection can be bootstrapped. In the set of methods evaluated here (see Section~\ref{sec:methodology}), only two - \probcover\  and \typiclust\ - are not affected by this problem. This can be seen in Fig.~\ref{fig:fixed_delta_supervised_low_budget}, noting that only these two methods do better than random in the initial step. Is this the only reason they outperform other methods in low budgets?

To address this question, we repeat the experiments reported in Fig.~\ref{fig:3a}-\ref{fig:3b}, using an initial random set of annotated examples across the board and by all methods. Results are reported in Fig.~\ref{fig:random_initial_pool_ablation}. When comparing Fig.~\ref{fig:3a}-\ref{fig:3b} and Fig.~\ref{fig:random_initial_pool_ablation}, we see that the advantage of \probcover\ and \typiclust\ goes beyond the initial set selection, and remains in effect even if this factor is eliminated. 

\begin{figure}[ht!]
\begin{minipage}{.49\textwidth}
    \center
    \begin{subfigure}{\textwidth}
    \includegraphics[width=\linewidth]{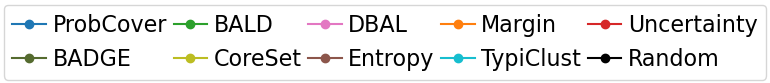}
    \end{subfigure}
    \\
    \begin{subfigure}{0.49\textwidth}
    \includegraphics[width=\linewidth]{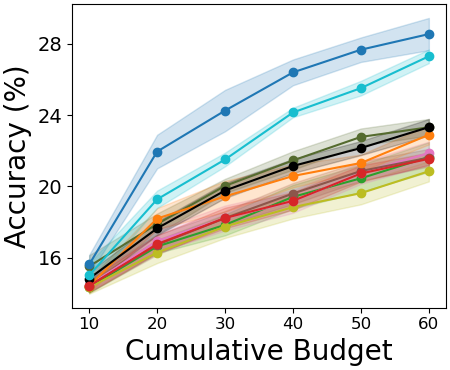}
    \caption{CIFAR-10}
    \end{subfigure}
    \begin{subfigure}{0.49\textwidth}
    \includegraphics[width=\linewidth]{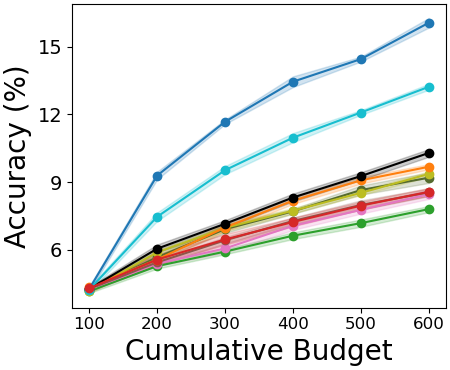}
    \caption{CIFAR-100}
    \end{subfigure}
\caption{Random Initial pool in the supervised framework, an average of 1 sample per class.}
\label{fig:random_initial_pool_ablation}
\end{minipage}
\hspace{0.3cm}
\begin{minipage}{.49\textwidth}
\vspace{0.3cm}
    \center
    \begin{subfigure}{0.9\textwidth}
    \includegraphics[width=\linewidth]{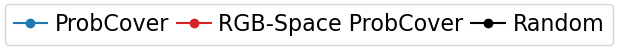}
    \end{subfigure}
    \begin{subfigure}{0.49\textwidth}
    \includegraphics[width=\linewidth]{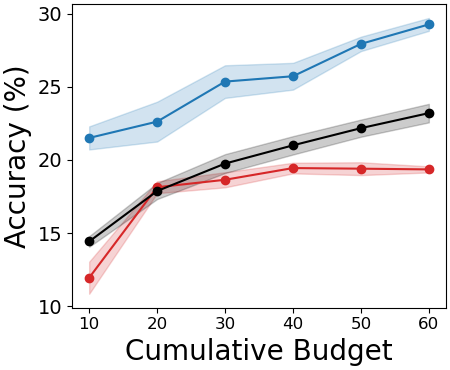}
    \caption{CIFAR-10}
    \end{subfigure}
    \begin{subfigure}{0.49\textwidth}
    \includegraphics[width=\linewidth]{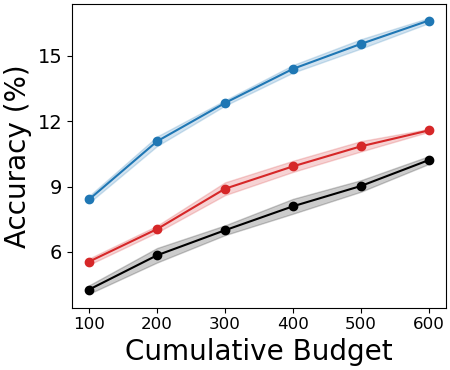}
    \caption{CIFAR-100}
    \end{subfigure}
\caption{Comparison of \probcover\ when applied to the raw data vs the embedding space.}
\label{fig:inputspace_ablation}
\end{minipage}
\vspace{-0.5cm}
\end{figure}

\myparagraph{RGB space distances}
As discussed in Section~\ref{sec:method}, our approach relies on the existence of a good embedding space, where distance is correlated with semantic similarity. We now verify this claim by repeating the basic fully-supervised experiments (Fig.~\ref{fig:fixed_delta_supervised_low_budget}) with one difference: \probcover\  can only use the original RGB space representation to compute distances. Results are shown in Fig.~\ref{fig:inputspace_ablation}. When comparing the original \probcover\ with its variant using RGB space, a significant drop in performance is seen as expected, demonstrating the importance of the semantic embedding space.

\myparagraph{The interaction between $\delta$ and budget size}
To understand the interaction between the hyper-parameter $\delta$ and budget $b$, we repeat our basic experiments (Fig.~\ref{fig:fixed_delta_supervised_low_budget}) with different choices of $\delta$ and $b$ using CIFAR-10. For each pair $(\delta,b)$, we select an initial pool of $b$ examples using \probcover\ with $\delta$ balls, and report the difference in accuracy from the selection of $b$ random points. Average results across $3$ repetitions are shown in Fig.~\ref{fig:delta_budget_grid} as a function of $b$. We see that as the budget $b$ increases, smaller $\delta$'s are preferred.

\begin{figure}[ht!]
\begin{minipage}{.31\textwidth}
    \center
    \begin{subfigure}{\textwidth}
    \includegraphics[width=\linewidth]{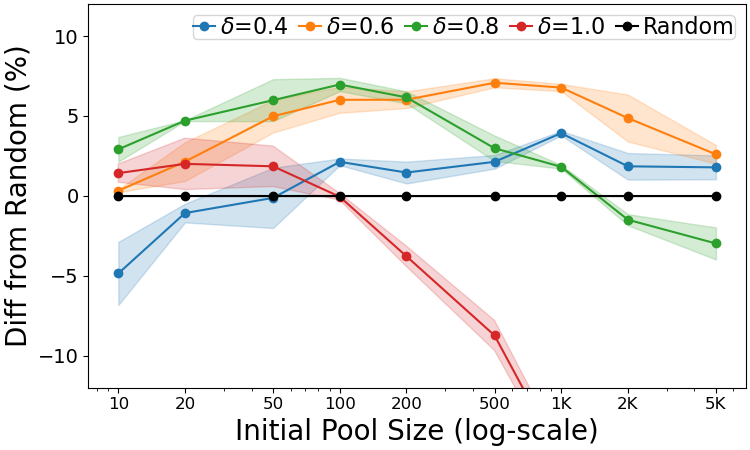}
    \end{subfigure}
\caption{The accuracy difference between \probcover\  when using different $\delta$ values, and the outcome of $b$ random samples (average over $3$ repetitions). }
\label{fig:delta_budget_grid}
\end{minipage}
\hspace{0.3cm}
\begin{minipage}{.68\textwidth}
    \center
    \begin{subfigure}{.325\textwidth}
    \includegraphics[width=\linewidth]{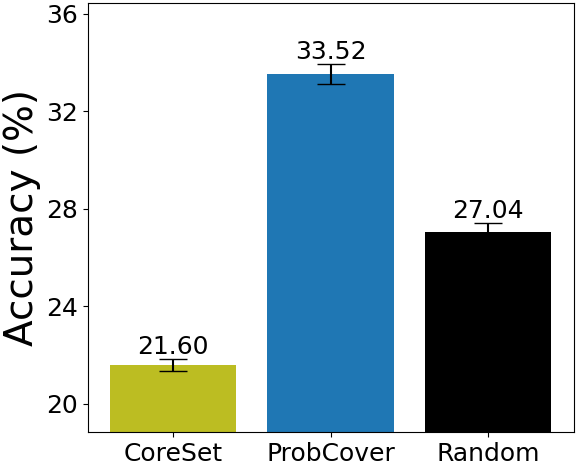}
    \caption{Low budget}
    \end{subfigure}
    \begin{subfigure}{.325\textwidth}
    \includegraphics[width=\linewidth]{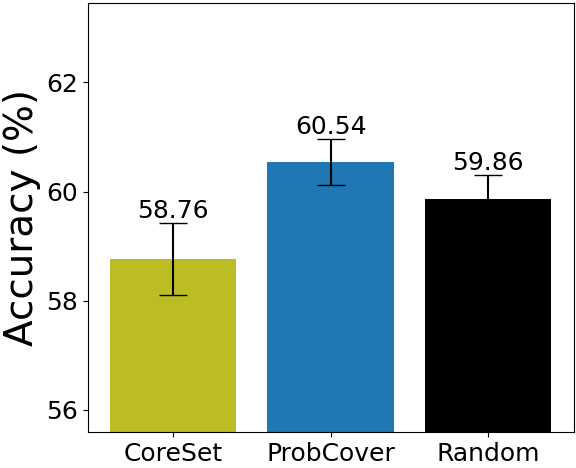}
    \caption{Mid budget}
    \end{subfigure}
    \begin{subfigure}{.325\textwidth}
    \includegraphics[width=\linewidth]{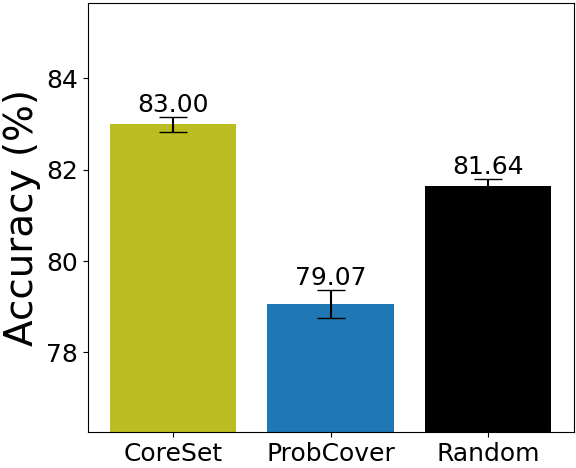}
    \caption{High Budget}
    \end{subfigure}
\caption{Comparing the performance under the supervised framework of \probcover\ and \coreset\ on different budget regimes. The low budget shows an initial pool selection of 100 samples. Mid/High budget start with 1K/5K samples and query additional 1K/5K samples (see text).}
\label{fig:coreset_maxcov_comparison}
\end{minipage}
\end{figure}

\myparagraph{\coreset\  vs. \probcover.}
In Section~\ref{sec:coreset_duality} we argue that \probcover\ is suitable for low budgets, while \coreset\  is suitable for high budgets. To verify this claim, we compare their performance under the following 3 setups while using the same embedding space, and report results on CIFAR-10:
\begin{itemize}[leftmargin=0.65cm]
    \item Low budget - Select an initial pool of 100 samples using the SimCLR representation.
    \item High budget - Train a model on 5K randomly selected examples. Then select an additional set of 5K examples using the learner's latent representation. This is the setup used by \citet{sener2018active}.
    \item Mid budget - Same as high budget, except the initial pool size and added budget are 1K.
\end{itemize}
Results are reported in Fig.~\ref{fig:coreset_maxcov_comparison}. In the low budget regime, \probcover\ outperforms \coreset\  as would be expected. In the mid-budget regime, where the feature space of the learner is informative, only \probcover\ achieves significant improvement over random selection. In the high budget regime, \coreset\ improves over random selection, while \probcover\ is least effective. 

\section{Summary and Discussion}

We study the problem of AL in the low-budget regime. We model the problem as \pcov, showing that under certain assumptions on the data distribution, which are likely to hold in self-supervised embedding spaces, it optimizes an upper-bound on the generalization error of a 1-NN classifier. We devise an AL strategy termed \probcover, which approximates the optimal solution. We empirically evaluate it in supervised and semi-supervised frameworks across different datasets, showing that \probcover\  significantly outperforms other methods in the low-budget regime.

In future work we intend to investigate: \begin{inparaenum}[(i)]
\item possible avenues for improving the choice of $\delta$ by making use of already known labels, or inferring a score for $\delta$ through the topology of the resulting covering graph;
\item extensions of the current formulation of \pcov, by making $\delta$ - the radius of the balls - dependent on the samples rather than uniform;
\item soft-coverage approaches, where the covering notion is not binary but some continuous measure, which may allow us to do away with $\delta$. 
\end{inparaenum}

\section*{Acknowledgments}
This work was supported by the Israeli Ministry of Science and Technology, and by the Gatsby Charitable Foundations. We are grateful to our dedicated NeurIPS AC, who acted upon the lengthy discussion between us and the reviewers, as seen on OpenReview. 

\bibliographystyle{plainnat}
\bibliography{bib}

\begin{thebibliography}{55}
\providecommand{\natexlab}[1]{#1}
\providecommand{\url}[1]{\texttt{#1}}
\expandafter\ifx\csname urlstyle\endcsname\relax
  \providecommand{\doi}[1]{doi: #1}\else
  \providecommand{\doi}{doi: \begingroup \urlstyle{rm}\Url}\fi

\bibitem[Ash et~al.(2020)Ash, Zhang, Krishnamurthy, Langford, and
  Agarwal]{DBLP:conf/iclr/AshZK0A20}
Jordan~T. Ash, Chicheng Zhang, Akshay Krishnamurthy, John Langford, and Alekh
  Agarwal.
\newblock Deep batch active learning by diverse, uncertain gradient lower
  bounds.
\newblock In \emph{8th International Conference on Learning Representations,
  {ICLR} 2020, Addis Ababa, Ethiopia, April 26-30, 2020}. OpenReview.net, 2020.

\bibitem[Attenberg and Provost(2010)]{attenberg2010label}
Josh Attenberg and Foster Provost.
\newblock Why label when you can search? alternatives to active learning for
  applying human resources to build classification models under extreme class
  imbalance.
\newblock In \emph{Proceedings of the 16th ACM SIGKDD international conference
  on Knowledge discovery and data mining}, pages 423--432, 2010.

\bibitem[Beluch et~al.(2018)Beluch, Genewein, N{\"u}rnberger, and
  K{\"o}hler]{beluch2018power}
William~H Beluch, Tim Genewein, Andreas N{\"u}rnberger, and Jan~M K{\"o}hler.
\newblock The power of ensembles for active learning in image classification.
\newblock In \emph{Proceedings of the IEEE Conference on Computer Vision and
  Pattern Recognition}, pages 9368--9377, 2018.

\bibitem[Bengar et~al.(2021)Bengar, van~de Weijer, Twardowski, and
  Raducanu]{bengar2021reducing}
Javad~Zolfaghari Bengar, Joost van~de Weijer, Bartlomiej Twardowski, and Bogdan
  Raducanu.
\newblock Reducing label effort: Self-supervised meets active learning.
\newblock In \emph{Proceedings of the IEEE/CVF International Conference on
  Computer Vision}, pages 1631--1639, 2021.

\bibitem[Caron et~al.(2021)Caron, Touvron, Misra, J{\'e}gou, Mairal,
  Bojanowski, and Joulin]{caron2021emerging}
Mathilde Caron, Hugo Touvron, Ishan Misra, Herv{\'e} J{\'e}gou, Julien Mairal,
  Piotr Bojanowski, and Armand Joulin.
\newblock Emerging properties in self-supervised vision transformers.
\newblock \emph{arXiv preprint arXiv:2104.14294}, 2021.

\bibitem[Chan et~al.(2021)Chan, Li, and Oymak]{chan2021marginal}
Yao-Chun Chan, Mingchen Li, and Samet Oymak.
\newblock On the marginal benefit of active learning: Does self-supervision eat
  its cake?
\newblock In \emph{ICASSP 2021-2021 IEEE International Conference on Acoustics,
  Speech and Signal Processing (ICASSP)}, pages 3455--3459. IEEE, 2021.

\bibitem[Chandra et~al.(2021)Chandra, Desai, Devaguptapu, and
  Balasubramanian]{chandra2021initial}
Akshay~L Chandra, Sai~Vikas Desai, Chaitanya Devaguptapu, and Vineeth~N
  Balasubramanian.
\newblock On initial pools for deep active learning.
\newblock In \emph{NeurIPS 2020 Workshop on Pre-registration in Machine
  Learning}, pages 14--32. PMLR, 2021.

\bibitem[Chen et~al.(2022)Chen, Bai, Huang, Lu, Wen, Yuille, and
  Zhou]{chen2022making}
Liangyu Chen, Yutong Bai, Siyu Huang, Yongyi Lu, Bihan Wen, Alan~L Yuille, and
  Zongwei Zhou.
\newblock Making your first choice: To address cold start problem in vision
  active learning.
\newblock \emph{arXiv preprint arXiv:2210.02442}, 2022.

\bibitem[Chen et~al.(2020{\natexlab{a}})Chen, Kornblith, Norouzi, and
  Hinton]{chen2020simple}
Ting Chen, Simon Kornblith, Mohammad Norouzi, and Geoffrey Hinton.
\newblock A simple framework for contrastive learning of visual
  representations.
\newblock In \emph{International conference on machine learning}, pages
  1597--1607. PMLR, 2020{\natexlab{a}}.

\bibitem[Chen et~al.(2020{\natexlab{b}})Chen, Kornblith, Swersky, Norouzi, and
  Hinton]{DBLP:conf/nips/ChenKSNH20}
Ting Chen, Simon Kornblith, Kevin Swersky, Mohammad Norouzi, and Geoffrey~E.
  Hinton.
\newblock Big self-supervised models are strong semi-supervised learners.
\newblock In Hugo Larochelle, Marc'Aurelio Ranzato, Raia Hadsell,
  Maria{-}Florina Balcan, and Hsuan{-}Tien Lin, editors, \emph{Advances in
  Neural Information Processing Systems 33: Annual Conference on Neural
  Information Processing Systems 2020, NeurIPS 2020, December 6-12, 2020,
  virtual}, 2020{\natexlab{b}}.

\bibitem[Cubuk et~al.(2020)Cubuk, Zoph, Shlens, and Le]{cubuk2020randaugment}
Ekin~D Cubuk, Barret Zoph, Jonathon Shlens, and Quoc~V Le.
\newblock Randaugment: Practical automated data augmentation with a reduced
  search space.
\newblock In \emph{Proceedings of the IEEE/CVF Conference on Computer Vision
  and Pattern Recognition Workshops}, pages 702--703, 2020.

\bibitem[Deng et~al.(2009)Deng, Dong, Socher, Li, Li, and
  Fei-Fei]{deng2009imagenet}
Jia Deng, Wei Dong, Richard Socher, Li-Jia Li, Kai Li, and Li~Fei-Fei.
\newblock Imagenet: A large-scale hierarchical image database.
\newblock In \emph{2009 IEEE conference on computer vision and pattern
  recognition}, pages 248--255. Ieee, 2009.

\bibitem[Farahani and Hekmatfar(2009)]{farahani2009facility}
Reza~Zanjirani Farahani and Masoud Hekmatfar.
\newblock \emph{Facility location: concepts, models, algorithms and case
  studies}.
\newblock Springer Science \& Business Media, 2009.

\bibitem[Gal and Ghahramani(2016)]{gal2016dropout}
Yarin Gal and Zoubin Ghahramani.
\newblock Dropout as a bayesian approximation: Representing model uncertainty
  in deep learning.
\newblock In \emph{international conference on machine learning}, pages
  1050--1059. PMLR, 2016.

\bibitem[Gal et~al.(2017)Gal, Islam, and Ghahramani]{gal2017deep}
Yarin Gal, Riashat Islam, and Zoubin Ghahramani.
\newblock Deep bayesian active learning with image data.
\newblock In \emph{International Conference on Machine Learning}, pages
  1183--1192. PMLR, 2017.

\bibitem[Gao et~al.(2020)Gao, Zhang, Yu, Ar{\i}k, Davis, and
  Pfister]{gao2020consistency}
Mingfei Gao, Zizhao Zhang, Guo Yu, Sercan~{\"O} Ar{\i}k, Larry~S Davis, and
  Tomas Pfister.
\newblock Consistency-based semi-supervised active learning: Towards minimizing
  labeling cost.
\newblock In \emph{European Conference on Computer Vision}, pages 510--526.
  Springer, 2020.

\bibitem[Geifman and El-Yaniv(2017)]{geifman2017deep}
Yonatan Geifman and Ran El-Yaniv.
\newblock Deep active learning over the long tail.
\newblock \emph{arXiv preprint arXiv:1711.00941}, 2017.

\bibitem[Gissin and Shalev-Shwartz(2019)]{gissin2019discriminative}
Daniel Gissin and Shai Shalev-Shwartz.
\newblock Discriminative active learning.
\newblock \emph{arXiv preprint arXiv:1907.06347}, 2019.

\bibitem[Hacohen et~al.(2022)Hacohen, Dekel, and Weinshall]{hacohen2022active}
Guy Hacohen, Avihu Dekel, and Daphna Weinshall.
\newblock Active learning on a budget: Opposite strategies suit high and low
  budgets.
\newblock \emph{arXiv preprint arXiv:2202.02794}, 2022.

\bibitem[He et~al.(2016)He, Zhang, Ren, and Sun]{he2016deep}
Kaiming He, Xiangyu Zhang, Shaoqing Ren, and Jian Sun.
\newblock Deep residual learning for image recognition.
\newblock In \emph{Proceedings of the IEEE conference on computer vision and
  pattern recognition}, pages 770--778, 2016.

\bibitem[He et~al.(2019)He, Jin, Ding, Yi, and Yan]{he2019towards}
Tao He, Xiaoming Jin, Guiguang Ding, Lan Yi, and Chenggang Yan.
\newblock Towards better uncertainty sampling: Active learning with multiple
  views for deep convolutional neural network.
\newblock In \emph{2019 IEEE International Conference on Multimedia and Expo
  (ICME)}, pages 1360--1365. IEEE, 2019.

\bibitem[Hong et~al.(2020)Hong, Ha, Kim, and Choi]{hong2020deep}
SeulGi Hong, Heonjin Ha, Junmo Kim, and Min-Kook Choi.
\newblock Deep active learning with augmentation-based consistency estimation.
\newblock \emph{arXiv preprint arXiv:2011.02666}, 2020.

\bibitem[Houlsby et~al.(2014)Houlsby, Hern{\'a}ndez-Lobato, and
  Ghahramani]{houlsby2014cold}
Neil Houlsby, Jos{\'e}~Miguel Hern{\'a}ndez-Lobato, and Zoubin Ghahramani.
\newblock Cold-start active learning with robust ordinal matrix factorization.
\newblock In \emph{International Conference on Machine Learning}, pages
  766--774. PMLR, 2014.

\bibitem[Hunt et~al.(1998)Hunt, Marathe, Radhakrishnan, Ravi, Rosenkrantz, and
  Stearns]{hunt1998nc}
Harry B~III Hunt, Madhav~V Marathe, Venkatesh Radhakrishnan, Shankar~S Ravi,
  Daniel~J Rosenkrantz, and Richard~E Stearns.
\newblock Nc-approximation schemes for np-and pspace-hard problems for
  geometric graphs.
\newblock \emph{Journal of algorithms}, 26\penalty0 (2):\penalty0 238--274,
  1998.

\bibitem[Joshi et~al.(2009)Joshi, Porikli, and
  Papanikolopoulos]{joshi2009multi}
Ajay~J Joshi, Fatih Porikli, and Nikolaos Papanikolopoulos.
\newblock Multi-class active learning for image classification.
\newblock In \emph{2009 IEEE Conference on Computer Vision and Pattern
  Recognition}, pages 2372--2379. IEEE, 2009.

\bibitem[Kirsch et~al.(2019)Kirsch, Van~Amersfoort, and
  Gal]{kirsch2019batchbald}
Andreas Kirsch, Joost Van~Amersfoort, and Yarin Gal.
\newblock Batchbald: Efficient and diverse batch acquisition for deep bayesian
  active learning.
\newblock \emph{Advances in neural information processing systems},
  32:\penalty0 7026--7037, 2019.

\bibitem[Krause and Golovin(2014)]{krause2014submodular}
Andreas Krause and Daniel Golovin.
\newblock Submodular function maximization.
\newblock \emph{Tractability}, 3:\penalty0 71--104, 2014.

\bibitem[Krizhevsky et~al.(2009)Krizhevsky, Hinton,
  et~al.]{krizhevsky2009learning}
Alex Krizhevsky, Geoffrey Hinton, et~al.
\newblock Learning multiple layers of features from tiny images.
\newblock \emph{Online}, 2009.

\bibitem[Le and Yang(2015)]{le2015tiny}
Ya~Le and Xuan Yang.
\newblock Tiny imagenet visual recognition challenge.
\newblock \emph{CS 231N}, 7\penalty0 (7):\penalty0 3, 2015.

\bibitem[Mahmood et~al.(2021)Mahmood, Fidler, and Law]{mahmood2021low}
Rafid Mahmood, Sanja Fidler, and Marc~T Law.
\newblock Low budget active learning via wasserstein distance: An integer
  programming approach.
\newblock \emph{arXiv preprint arXiv:2106.02968}, 2021.

\bibitem[Marx(2005)]{marx2005efficient}
D{\'a}niel Marx.
\newblock Efficient approximation schemes for geometric problems?
\newblock In \emph{European Symposium on Algorithms}, pages 448--459. Springer,
  2005.

\bibitem[Mittal et~al.(2019)Mittal, Tatarchenko, {\c{C}}i{\c{c}}ek, and
  Brox]{mittal2019parting}
Sudhanshu Mittal, Maxim Tatarchenko, {\"O}zg{\"u}n {\c{C}}i{\c{c}}ek, and
  Thomas Brox.
\newblock Parting with illusions about deep active learning.
\newblock \emph{arXiv preprint arXiv:1912.05361}, 2019.

\bibitem[Munjal et~al.(2020)Munjal, Hayat, Hayat, Sourati, and
  Khan]{Munjal2020TowardsRA}
Prateek Munjal, N.~Hayat, Munawar Hayat, J.~Sourati, and S.~Khan.
\newblock Towards robust and reproducible active learning using neural
  networks.
\newblock \emph{ArXiv}, abs/2002.09564, 2020.

\bibitem[Nemhauser et~al.(1978)Nemhauser, Wolsey, and
  Fisher]{nemhauser1978analysis}
George~L Nemhauser, Laurence~A Wolsey, and Marshall~L Fisher.
\newblock An analysis of approximations for maximizing submodular set
  functions—i.
\newblock \emph{Mathematical programming}, 14\penalty0 (1):\penalty0 265--294,
  1978.

\bibitem[Pourahmadi et~al.(2021)Pourahmadi, Nooralinejad, and
  Pirsiavash]{pourahmadi2021simple}
Kossar Pourahmadi, Parsa Nooralinejad, and Hamed Pirsiavash.
\newblock A simple baseline for low-budget active learning.
\newblock \emph{arXiv preprint arXiv:2110.12033}, 2021.

\bibitem[Ranganathan et~al.(2017)Ranganathan, Venkateswara, Chakraborty, and
  Panchanathan]{ranganathan2017deep}
Hiranmayi Ranganathan, Hemanth Venkateswara, Shayok Chakraborty, and Sethuraman
  Panchanathan.
\newblock Deep active learning for image classification.
\newblock In \emph{2017 IEEE International Conference on Image Processing
  (ICIP)}, pages 3934--3938. IEEE, 2017.

\bibitem[Sener and Savarese(2018)]{sener2018active}
Ozan Sener and Silvio Savarese.
\newblock Active learning for convolutional neural networks: A core-set
  approach.
\newblock In \emph{International Conference on Learning Representations}, 2018.

\bibitem[Shui et~al.(2020)Shui, Zhou, Gagn{\'e}, and Wang]{shui2020deep}
Changjian Shui, Fan Zhou, Christian Gagn{\'e}, and Boyu Wang.
\newblock Deep active learning: Unified and principled method for query and
  training.
\newblock In \emph{International Conference on Artificial Intelligence and
  Statistics}, pages 1308--1318. PMLR, 2020.

\bibitem[Sim{\'e}oni et~al.(2021)Sim{\'e}oni, Budnik, Avrithis, and
  Gravier]{simeoni2021rethinking}
Oriane Sim{\'e}oni, Mateusz Budnik, Yannis Avrithis, and Guillaume Gravier.
\newblock Rethinking deep active learning: Using unlabeled data at model
  training.
\newblock In \emph{2020 25th International Conference on Pattern Recognition
  (ICPR)}, pages 1220--1227. IEEE, 2021.

\bibitem[Sinha et~al.(2019)Sinha, Ebrahimi, and Darrell]{sinha2019variational}
Samarth Sinha, Sayna Ebrahimi, and Trevor Darrell.
\newblock Variational adversarial active learning.
\newblock In \emph{Proceedings of the IEEE/CVF International Conference on
  Computer Vision}, pages 5972--5981, 2019.

\bibitem[Van~Gansbeke et~al.(2020)Van~Gansbeke, Vandenhende, Georgoulis,
  Proesmans, and Van~Gool]{van2020scan}
Wouter Van~Gansbeke, Simon Vandenhende, Stamatios Georgoulis, Marc Proesmans,
  and Luc Van~Gool.
\newblock Scan: Learning to classify images without labels.
\newblock In \emph{European Conference on Computer Vision}, pages 268--285.
  Springer, 2020.

\bibitem[Wang et~al.(2016)Wang, Du, Zhang, and Zhang]{wang2016batch}
Zengmao Wang, Bo~Du, Lefei Zhang, and Liangpei Zhang.
\newblock A batch-mode active learning framework by querying discriminative and
  representative samples for hyperspectral image classification.
\newblock \emph{Neurocomputing}, 179:\penalty0 88--100, 2016.

\bibitem[Wei et~al.(2015)Wei, Iyer, and Bilmes]{wei2015submodularity}
Kai Wei, Rishabh Iyer, and Jeff Bilmes.
\newblock Submodularity in data subset selection and active learning.
\newblock In \emph{International Conference on Machine Learning}, pages
  1954--1963. PMLR, 2015.

\bibitem[Weinshall et~al.(2008)Weinshall, Hermansky, Zweig, Luo, Jimison, Ohl,
  and Pavel]{weinshall2008beyond}
Daphna Weinshall, Hynek Hermansky, Alon Zweig, Jie Luo, Holly Jimison, Frank
  Ohl, and Misha Pavel.
\newblock Beyond novelty detection: Incongruent events, when general and
  specific classifiers disagree.
\newblock \emph{Advances in Neural Information Processing Systems}, 21, 2008.

\bibitem[Wen et~al.(2022)Wen, Pizarro, and Williams]{wen2022active}
Ziting Wen, Oscar Pizarro, and Stefan Williams.
\newblock Active self-semi-supervised learning for few labeled samples fast
  training.
\newblock \emph{arXiv preprint arXiv:2203.04560}, 2022.

\bibitem[Yang et~al.(2015)Yang, Ma, Nie, Chang, and Hauptmann]{yang2015multi}
Yi~Yang, Zhigang Ma, Feiping Nie, Xiaojun Chang, and Alexander~G Hauptmann.
\newblock Multi-class active learning by uncertainty sampling with diversity
  maximization.
\newblock \emph{International Journal of Computer Vision}, 113\penalty0
  (2):\penalty0 113--127, 2015.

\bibitem[Yin et~al.(2017)Yin, Qian, Cao, Li, Wei, Zheng, and
  Davidson]{yin2017deep}
Changchang Yin, Buyue Qian, Shilei Cao, Xiaoyu Li, Jishang Wei, Qinghua Zheng,
  and Ian Davidson.
\newblock Deep similarity-based batch mode active learning with
  exploration-exploitation.
\newblock In \emph{2017 IEEE International Conference on Data Mining (ICDM)},
  pages 575--584. IEEE, 2017.

\bibitem[Yoo and Kweon(2019)]{yoo2019learning}
Donggeun Yoo and In~So Kweon.
\newblock Learning loss for active learning.
\newblock In \emph{Proceedings of the IEEE/CVF Conference on Computer Vision
  and Pattern Recognition}, pages 93--102, 2019.

\bibitem[Yu et~al.(2022)Yu, Zhang, Xu, Zhang, Shen, and Zhang]{yu2022cold}
Yue Yu, Rongzhi Zhang, Ran Xu, Jieyu Zhang, Jiaming Shen, and Chao Zhang.
\newblock Cold-start data selection for few-shot language model fine-tuning: A
  prompt-based uncertainty propagation approach.
\newblock \emph{arXiv preprint arXiv:2209.06995}, 2022.

\bibitem[Yuan et~al.(2020)Yuan, Lin, and
  Boyd{-}Graber]{DBLP:conf/emnlp/YuanLB20}
Michelle Yuan, Hsuan{-}Tien Lin, and Jordan~L. Boyd{-}Graber.
\newblock Cold-start active learning through self-supervised language modeling.
\newblock In Bonnie Webber, Trevor Cohn, Yulan He, and Yang Liu, editors,
  \emph{Proceedings of the 2020 Conference on Empirical Methods in Natural
  Language Processing, {EMNLP} 2020, Online, November 16-20, 2020}, pages
  7935--7948. Association for Computational Linguistics, 2020.

\bibitem[Zhang et~al.(2021{\natexlab{a}})Zhang, Wang, Hou, Wu, Wang, Okumura,
  and Shinozaki]{DBLP:journals/corr/abs-2110-08263}
Bowen Zhang, Yidong Wang, Wenxin Hou, Hao Wu, Jindong Wang, Manabu Okumura, and
  Takahiro Shinozaki.
\newblock Flexmatch: Boosting semi-supervised learning with curriculum pseudo
  labeling.
\newblock \emph{CoRR}, abs/2110.08263, 2021{\natexlab{a}}.

\bibitem[Zhang et~al.(2021{\natexlab{b}})Zhang, Bengio, Hardt, Recht, and
  Vinyals]{zhang2021understanding}
Chiyuan Zhang, Samy Bengio, Moritz Hardt, Benjamin Recht, and Oriol Vinyals.
\newblock Understanding deep learning (still) requires rethinking
  generalization.
\newblock \emph{Communications of the ACM}, 64\penalty0 (3):\penalty0 107--115,
  2021{\natexlab{b}}.

\bibitem[Zhang et~al.(2018)Zhang, Isola, Efros, Shechtman, and
  Wang]{zhang2018unreasonable}
Richard Zhang, Phillip Isola, Alexei~A Efros, Eli Shechtman, and Oliver Wang.
\newblock The unreasonable effectiveness of deep features as a perceptual
  metric.
\newblock In \emph{Proceedings of the IEEE conference on computer vision and
  pattern recognition}, pages 586--595, 2018.

\bibitem[Zhdanov(2019)]{zhdanov2019diverse}
Fedor Zhdanov.
\newblock Diverse mini-batch active learning.
\newblock \emph{arXiv preprint arXiv:1901.05954}, 2019.

\bibitem[Zhu et~al.(2020)Zhu, Lin, He, Wang, Guan, Liu, and
  Cai]{DBLP:journals/tkde/ZhuLHWGLC20}
Yu~Zhu, Jinghao Lin, Shibi He, Beidou Wang, Ziyu Guan, Haifeng Liu, and Deng
  Cai.
\newblock Addressing the item cold-start problem by attribute-driven active
  learning.
\newblock \emph{{IEEE} Trans. Knowl. Data Eng.}, 32\penalty0 (4):\penalty0
  631--644, 2020.
\newblock \doi{10.1109/TKDE.2019.2891530}.

\end{thebibliography}
\newpage

\clearpage
\appendix

\section*{Appendix}

\section{Some Definitions}
\label{app:definition}

In the proof of the main theorem below we use \maxcov, a known NP-hard problem. We recapitulate its definition as follows:\\
\begin{definition}[\maxcov] \label{def:maxcov}
Let $b\in \mathbb N$ denote an integer, $U$ denote a set of elements, and let $S=\{S_1,\dots, S_m\}$ denote a collection of subsets of $U$. In the problem of \maxcov\ we wish to find $b$ subsets in $S$ with union of maximum cardinality
\[\underset{\substack{S'\subseteq S; |S'|=b}}{\text{argmax}}\left|\bigcup_{S_i\in S'} S_i\right|\]
\end{definition}

For convenience, we also repeat the definition of \pcov\ from Section~\ref{sec:prob_cover}: \\
\begin{definition}[\pcov] \label{def:pcov}
Let $\X $ denote the input probability space. Let $X=\left\{x_i\right\}_{i=1}^m,~x_i\in\X$ denote a set of points. Let $b\in \mathbb N$ denote an integer, and fix $\delta>0$.  In the problem of \pcov\ we wish to find a subset $L\subset X,\, |L|=b$, that maximizes the following probability
\begin{equation*}
\underset{L\subseteq X; |L|=b}{\text{argmax}} P(\bigcup_{x\in L}B_\delta(x))   
\end{equation*}
\end{definition}

\section{\pcov\ is NP-Hard}
\label{app:np_hard_probcover}

We first describe a constructive procedure to generate a collection of balls in $\mathbb R^m$ with a property we call \emph{exhaustive intersection} (Def.~\ref{def:exhaustive}). We then use this collection in order to construct a reduction from \maxcov\ to \pcov.

\subsection{\emph{Exhaustive Intersection}: Constructive Procedure}

Let $\{e_i\}_{i=1}^m$ denote the natural basis of $\mathbb R^m$. Let $B_r(p)$ denote the open ball of radius $r$ around $p$, and Let $B_r[p]$ similarly denote the closed ball.

\begin{definition}[Inversion mapping]
We define the \emph{inversion} mapping $\iota:\mathbb R^{m}\setminus\{0\}\to \mathbb R^m\setminus\{0\}$ as 
\[\iota(p)=\frac{p}{\|p\|_2^2}\]
\end{definition}

\begin{figure}[htb]
\begin{subfigure}{.35\textwidth}
    \centering
    \includegraphics[width=\linewidth]{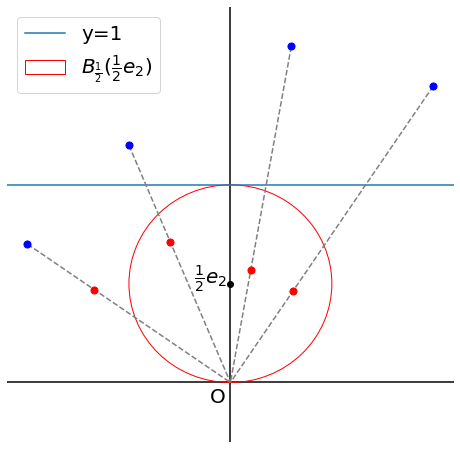}
    \caption{}
    \label{fig:lemma2}
\end{subfigure}
\begin{subfigure}{.29\textwidth}
    \centering
    \includegraphics[width=\linewidth]{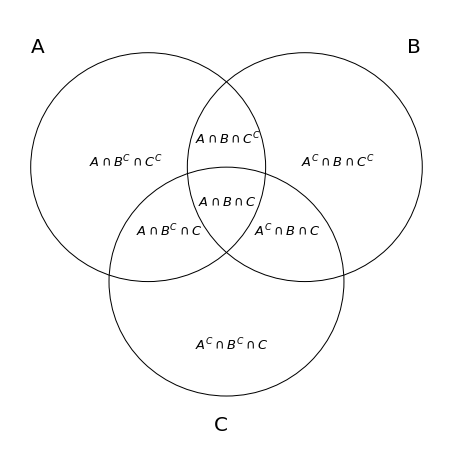}
    \caption{}
    \label{fig:venn}
\end{subfigure}
\begin{subfigure}{.35\textwidth}
    \includegraphics[width=\linewidth]{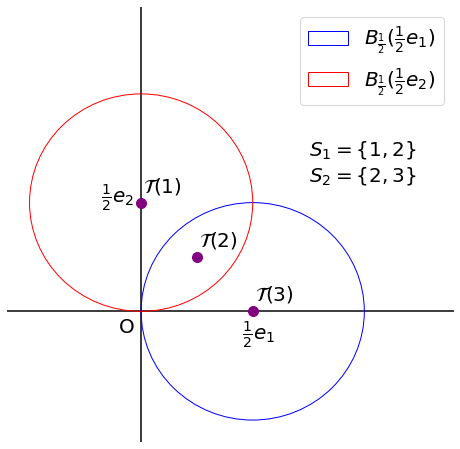}
    \caption{}
    \label{fig:induced-prob}
\end{subfigure}
    \caption{(a) Visualization of Lemma~\ref{lem:plane2ball}. The blue points above the plane $y=1$ are mapped by $\iota(\cdot)$ to the red points inside $B_{\frac12}(\frac12e_2)$. Points below $y=1$ are mapped outside the ball. (b) The exhaustive intersection in $\R^2$. (c) Visualization of the induced distribution for $m=2$. Points $1,3$ are mapped to disjoint parts of the two balls, while point $2$ is mapped to their intersection $B_{\frac12}(\frac12e_1)\cap B_{\frac12}(\frac12e_2)$. Each point is then assigned a Dirac measure and the distribution is the normalized sum, as a result of which we get that $P(B_{\frac12}(\frac12e_1))=P(B_{\frac12}(\frac12e_2))=\frac{2}{3}$. }
    \label{fig:nphard_vis}
\end{figure}

\begin{lemma}
Let $X_i=\{x\subseteq \mathbb R^m\mid x\cdot e_i>1\}$ denote a halfspace. Then $\iota(X_i)\subseteq B_{\frac{1}{2}}(\frac12e_i)$. 
\label{lem:plane2ball}
\end{lemma}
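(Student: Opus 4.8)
The plan is to establish the set equality by proving, for every $x\in\mathbb R^m\setminus\{0\}$, the pointwise equivalence $x\cdot e_i>1\iff \iota(x)\in B_{\frac12}(\frac12 e_i)$, and then to invoke the fact that $\iota$ is an involution. Since $\iota(\iota(p))=p$ for all $p\neq 0$, the map $\iota$ is a bijection of $\mathbb R^m\setminus\{0\}$ onto itself; hence proving the displayed equivalence for all $x$ immediately yields $\iota(X_i)=B_{\frac12}(\frac12 e_i)$ as an equality of sets, rather than just a single inclusion. Note also that $X_i$ avoids the origin, since $x\cdot e_i>1>0$ forces $x\neq 0$, so $\iota$ is defined on all of $X_i$.

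The main computation is to expand $\left\|\iota(x)-\frac12 e_i\right\|_2^2$. Writing $y=\iota(x)=x/\|x\|_2^2$, I would use the two identities $\|y\|_2^2=1/\|x\|_2^2$ and $y\cdot e_i=(x\cdot e_i)/\|x\|_2^2$, both of which follow directly from the definition of $\iota$. Expanding the square then gives
\[
\left\|y-\tfrac12 e_i\right\|_2^2=\|y\|_2^2-y\cdot e_i+\tfrac14=\frac{1-x\cdot e_i}{\|x\|_2^2}+\frac14 .
\]
Since $\|x\|_2^2>0$, the quantity $\frac{1-x\cdot e_i}{\|x\|_2^2}$ is strictly negative exactly when $x\cdot e_i>1$, so $\left\|y-\frac12 e_i\right\|_2^2<\frac14$ if and only if $x\cdot e_i>1$. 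This is precisely the desired equivalence $x\in X_i\iff \iota(x)\in B_{\frac12}(\frac12 e_i)$.

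There is no substantial obstacle here: the statement reduces to a single algebraic identity. The only points requiring a moment of care are (a) using the involution property to upgrade the pointwise equivalence into an equality of images, and (b) matching the strict inequality that defines the open halfspace with the strict inequality that defines the open ball, so that the boundary hyperplane $\{x\cdot e_i=1\}$ maps consistently to the bounding sphere and is excluded on both sides. With the equivalence established, the conclusion $\iota(X_i)=B_{\frac12}(\frac12 e_i)$ follows at once.
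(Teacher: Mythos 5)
Your proof is correct and follows essentially the same route as the paper: both establish the pointwise equivalence $x\cdot e_i>1\iff d\bigl(\iota(x),\tfrac12 e_i\bigr)<\tfrac12$ by the same algebra (you expand $\bigl\|\iota(x)-\tfrac12 e_i\bigr\|^2$ directly, the paper packages the identical computation via the polarization identity). Your explicit use of the involution property to upgrade the equivalence to an equality of images is a welcome bit of extra care that the paper leaves implicit.
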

\begin{proof}
Let $x\in X_i$. Membership in the two sets is defined by satisfying the equations $x\cdot e_i>1$ and $d(\iota(x),\frac12e_i)<\frac12$. We will show that they are equivalent (see visualization in Fig.~\ref{fig:lemma2}). We use the polarization identity $a\cdot b=\frac{1}{2}(\|a\|^2+\|b\|^2-d(a,b)^2)$ 
\begin{align*}
1&<x\cdot e_i\\
\frac{1}{2\|x\|^2}&<\left(\frac{x}{\|x\|^2}\right)\cdot\left(\frac{1}{2}e_i\right)\\
\frac{1}{2\|x\|^2}&<\frac12\left(\frac{1}{\|x\|^2}+\frac{1}{4}-d\Big(\frac{x}{\|x\|^2},\frac{1}{2}e_i\Big)^2\right)\\
d\left(\frac{x}{\|x\|^2},\frac{1}{2}e_i\right)^2&<\frac14\\
d\left(\iota(x),\frac12e_i\right)&<\frac12
\end{align*}
\end{proof}
\begin{corollary}
\label{cor:2}
$x\neq 0,~x\cdot e_i< 1\iff \iota(x)\notin B_\frac12[\frac12e_i]$.
\end{corollary}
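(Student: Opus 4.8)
The plan is to observe that the corollary is nothing more than the complementary version of the preceding lemma, obtained by reversing the direction of the inequality throughout the very same chain of equivalences. Since $\iota(x)=x/\|x\|^2$ and $\|x\|^2>0$, every manipulation used in the lemma's proof is a genuine biconditional that \emph{preserves} (rather than flips) the direction of the inequality, so the derivation can simply be re-run with the strict inequality pointing the other way.

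Concretely, I would start from $x\cdot e_i<1$ and apply the identical steps from the lemma: divide both sides by $2\|x\|^2>0$, and then substitute the polarization identity $a\cdot b=\tfrac12(\|a\|^2+\|b\|^2-d^2(a,b))$ with $a=\iota(x)$ and $b=\tfrac12 e_i$, using $\|\iota(x)\|^2=1/\|x\|^2$ and $\|\tfrac12 e_i\|^2=\tfrac14$. Cancelling the common $1/\|x\|^2$ term then leaves a pure comparison between $d^2(\iota(x),\tfrac12 e_i)$ and $\tfrac14$. This produces the chain
\[
x\cdot e_i<1 \iff \tfrac{1}{2\|x\|^2}>\iota(x)\cdot\left(\tfrac12 e_i\right) \iff d^2\!\left(\iota(x),\tfrac12 e_i\right)>\tfrac14 \iff d\!\left(\iota(x),\tfrac12 e_i\right)>\tfrac12 .
\]
The final condition is exactly the statement that $\iota(x)$ lies outside the closed ball $\overline{B_{\frac12}(\frac12 e_i)}=\{y:d(y,\tfrac12 e_i)\le\tfrac12\}$, which is the claim.

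The only point that needs care is the bookkeeping of strict versus non-strict inequalities, which is precisely why the statement is phrased with the closure $\overline{B}$ rather than the open ball $B$. The lemma covers the strict case $x\cdot e_i>1 \iff d<\tfrac12$ (open ball), while this corollary covers $x\cdot e_i<1 \iff d>\tfrac12$ (complement of the closed ball); the intermediate equality $x\cdot e_i=1$ corresponds to $d=\tfrac12$, i.e. $\iota(x)$ sitting on the bounding sphere, which belongs to $\overline{B}$ but not to $B$. Thus the two statements partition $\mathbb{R}^{m}\setminus\{0\}$ consistently, and beyond this boundary accounting there is no genuine obstacle — the result is an immediate corollary of the lemma's reversible computation.
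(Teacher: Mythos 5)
Your proof is correct and matches the paper's intent: the paper states this corollary without proof precisely because it follows by reversing the strict inequality through the same reversible chain (division by $2\|x\|^2>0$, polarization identity, cancellation) used in the preceding lemma, exactly as you do. Your remark on the boundary case $x\cdot e_i=1$ corresponding to $d(\iota(x),\tfrac12 e_i)=\tfrac12$, which explains the closure $\overline{B}$ in the statement, is the right bookkeeping.
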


\begin{definition}[Exhaustive intersection]
\label{def:exhaustive}
A collection of sets $(A_1,\dots, A_m)$ is said to have the \emph{exhaustive intersection} property if for any subset of indices $I\subset[m]$ there exists a point $x_I$ with
\begin{enumerate}
\item $x_I\in \bigcap_{i\in I}A_i$
\item $x_I\notin \bigcup_{j\in [m]\setminus I} A_j$ 
\end{enumerate}
Put differently, $x_I \in A_i \iff i\in I$ (see the Venn diagram example in Fig.~\ref{fig:venn}).
\end{definition}
\begin{lemma}
The collection of balls $\{B_{\frac12}(\frac12e_i)\}_{i=1}^m$ in $\mathbb R^m$ satisfies the \emph{exhaustive intersection} property.
\label{lem:2}
\end{lemma} 
\begin{proof}
Let $I\subseteq[m]$ denote a subset of indices and define $\tilde{x}_I=\sum_{j\in I} 2e_j$, $x_I=\iota(\tilde{x}_I)$. From Lemma~\ref{lem:plane2ball}, for any $j\in I,\,\tilde{x}_I\cdot e_j=2\Rightarrow x_I\in B_{\frac12}(\frac12e_j)$, which proves the first part of Def.~\ref{def:exhaustive}. From Cor.~\ref{cor:2}, any $j\notin I,\,\tilde{x}_I\cdot e_j=0\Rightarrow x_I\notin B_{\frac12}(\frac12e_j)$, which proves the second part.
\end{proof}

\subsection{Reduction from \maxcov\ to \pcov}

To improve readability, we restate in App.~\ref{app:definition} the precise definitions of \pcov\ (Def.~\ref{def:pcov}) and \maxcov\ (Def.~\ref{def:maxcov}). Before we start, we note that in order for \pcov\  to be computable and not be trivially polynomial, we must assume that the encoding of the input distribution to \pcov\ is of polynomial size in the number of points. 

\textbf{Theorem~\ref{thm:np-hard}.}
\pcov\ is NP-hard.
\begin{proof}

We construct a polynomial-time reduction from any \maxcov\ problem $\Pp$ to another problem $\tilde\Pp$, which is an instance of \pcov. Let $In=(S_1,\dots, S_m, b)$ be the input to $\Pp$, and let $U=\bigcup_{i=1}^m S_i$.

We define a mapping from the input of $\Pp$ to the input of $\tilde\Pp$: First, we fix the input space  of $\tilde\Pp$ to $\R^m$, where $m$ is the number of sets in $\Pp$, and fix the set of $m$ points $X$ in $\tilde\Pp$ to $\{\frac{1}{2}e_i\}_{i=1}^m\subseteq\mathbb R^m$. We let $b$ remain the same, and fix the radius $\delta=\frac12$. We abbreviate $B_i=B_{\frac12}(\frac12e_i)$. Next, we define a mapping $\T:U\to\R^m$ as $\T(u)=\iota(\sum_{i=1}^m 2\cdot\charac{u\in S_i}e_i)$. From the proof of Lemma~\ref{lem:2} we conclude that
\[\T(u)\in B_i \iff u\in S_i\]
Finally, we define the probability distribution $P$ as $P(A)=\frac{1}{|U|}\sum_{u\in U} \charac{\T(u)\in A}$ (see visualization in Fig.~\ref{fig:induced-prob}).
\begin{lemma}
$P$ is a valid probability distribution.
\end{lemma}
\begin{proof}

$P$ is a finite sum of Dirac measures $\charac{\T(u)\in A}$, and as such it is a measure itself. Hence we only need to show that it is normalized, ie $P(\mathbb R^m)=1$:
\[P(\mathbb R^m)=\frac{1}{|U|}\sum_{u\in U}\charac{\T(u)\in \mathbb R^m}=\frac1{|U|}\sum_{u\in U}1=1\]
\end{proof}
In summary, the input of $\tilde\Pp$ is the following:
\setlist{nolistsep}
\begin{itemize}
    \item Dataset: $X(In)=\{\frac{1}{2}e_i\}_{i=1}^m\subseteq\mathbb R^m$.
    \item Budget: $b(In)=b$.
    \item Ball radius: $\delta(In)=\frac12$.
    \item Distribution: $(P(In))(A)=\frac{1}{|U|}\sum_{u\in U} \charac{\T(u)\in A}$
\end{itemize}

Before we continue, we require another short lemma:
\begin{lemma}
For all $u\in U$, $I\subseteq [m]$, $\charac{\T(u)\in \bigcup_{i\in I}B_i}=\charac{u\in \bigcup_{i\in I}S_i}$
\end{lemma}
\begin{proof}
We prove the conditions are equivalent:
\[\T(u)\in \bigcup_{i\in I}B_i\iff\exists i\in I\quad \T(u)\in B_i\iff \exists i\in I\quad u\in S_i\iff u\in \bigcup_{i\in I}S_i\]
\end{proof}

To show that a reduction is valid for an optimization problem, we need to show that the objectives are equivalent. The objective in \maxcov\ is the size of the union, whereas in \pcov\ it is the probability of the $\delta$-ball union.
\begin{align*}
P(\bigcup_{i\in I} B_i)&=\frac{1}{|U|}\sum_{u\in U}\charac{\T(u)\in\bigcup_{i\in I}B_i}\\
&=\frac{1}{|U|}\sum_{u\in U}\charac{u\in\bigcup_{i\in I}S_i}\\
&=\frac{1}{|U|}\left|\bigcup_{i\in I}S_i\right|\\
\end{align*}
Since $|U|$ is constant the optimization is equivalent.

Finally, we show that the induced probability can be specified in polynomial space: as $|U|$ is polynomial in the size of the input, it follows that the distribution as a normalized sum of indicator functions can be specified as a table of the embedding of size $|U|\cdot m$, which is polynomial.
\end{proof}


\section{Implementation Details}
\label{app:eval_impl_details}
Source code used in this work is available at the following url:

\href{https://github.com/avihu111/TypiClust}{\fontfamily{qcr}\selectfont https://github.com/avihu111/TypiClust}

\subsection{Selection Method}
\label{app:method_implementation_details}
\myparagraph{Representation Learning: CIFAR10, CIFAR100, TinyImageNet.}
To extract semantically meaningful features, we trained SimCLR using the code provided by \citet{van2020scan} for CIFAR-10, CIFAR-100 and TinyImageNet. Specifically, we used ResNet-18 \citep{he2016deep} with an MLP projection layer to a $128$-dim vector, trained for 500 epochs. All the training hyper-parameters were identical to those used by SCAN (all details can be found in \citet{van2020scan}). After training, we used the $512$ dimensional penultimate layer as the representation space. 

\myparagraph{Representation Learning: ImageNet.}
We extracted features from the official (ViT-S/16) DINO weights pre-trained on ImageNet. We used the L2 normalized penultimate layer for the embedding. All the exact hyper-parameters can be found at \citet{caron2021emerging}.

\myparagraph{Randomness in \probcover\ Selections.}
In order to reduce the correlation between different repetitions using \probcover, we added the following modification to the selection algorithm: instead of taking the node with the highest degree at each iteration, we selected randomly one of the $5$ nodes with the highest degree. We verified that both algorithms achieved similar performance, where the deterministic version has slightly better results.

\subsection{Fully Supervised Evaluation}
We trained a ResNet18 on the labeled set, using the AL comparison framework created by \citet{Munjal2020TowardsRA}, and following the protocol described in \citep{hacohen2022active} (see details in \citep{hacohen2022active} and the shared code).

\subsection{1-NN Classification with Self-Supervised Embeddings}
\label{app:linear_eval_implementation}
In these experiments, we also used the framework by \citet{Munjal2020TowardsRA}.
We extracted an embedding similar to \S~\ref{app:method_implementation_details}, with which we trained a 1-NN classifier using the default parameters of \emph{scikit-learn}.

\subsection{Semi-Supervised Classification}
\label{app:semi_implementation}
When training FlexMatch \citep{zhang2021understanding}, we used the AL framework by \citet{DBLP:journals/corr/abs-2110-08263}. All experiments involved 3 repetitions.

\myparagraph{CIFAR-10.} We used the standard hyper-parameters used by FlexMatch \citep{zhang2021understanding}.
Specifically, we trained WideResNet-28 for 400k iterations using the SGD optimizer, with $0.03$ learning rate, $64$ batch size, $0.9$ momentum, $0.0005$ weight decay, $2$ widen factor, and $0.1$ leaky slope. The weak augmentations used are identical to those used in FlexMatch and include random crops and horizontal flips, while the strong augmentations were generated by RandAugment \citep{cubuk2020randaugment}. 

\myparagraph{CIFAR-100.} Similar to CIFAR-10, but increasing the widen factor to $8$.

\myparagraph{TinyImageNet.} We trained ResNet-50, for 1.1m iterations. We used an SGD optimizer, with a $0.03$ learning rate, $32$ batch size, $0.9$ momentum, $0.0003$ weight decay, and $0.1$ leaky slope. The augmentations were similar to those used in FlexMatch.

\section{Additional Empirical Results}

\subsection{Improving the greedy approximation}
\label{sec:improving_greedy}
The greedy approximation used in \probcover\ guarantees $1-\frac{1}{e}$ approximation to the maximum cover problem. \citet{hunt1998nc} showed that a polynomial time approximation scheme (PTAS) exists for this problem, suggesting the possibility of better polynomial approximations. However, \citet{marx2005efficient} proved that there is no efficient PTAS to this problem, implying that such polynomial approximations may not be practical. For example, to achieve a $1-\frac{1}{e}$ approximation using the PTAS suggested in \citet{marx2005efficient} would require $O(n^{100})$ time. Thus, a significantly better approximation than the greedy solution is left for future work. Instead, we improved the greedy algorithm by choosing at each iteration the optimal 2 balls in a greedy way. While this greedy solution achieves a better approximation in theory, in practice we did not see any improvement over the single-ball greedy solution.

\label{app:additional_empirical}
\subsection{ImageNet subsets}
\label{app:imagenet_subsets}
\begin{figure}[ht!]
\center
\begin{subfigure}{.995\textwidth}
\includegraphics[width=\linewidth]{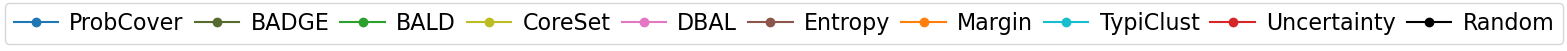}
\end{subfigure}
\\
\begin{subfigure}{.31\textwidth}
    \center
    \begin{subfigure}{\textwidth}
    \includegraphics[width=\linewidth]{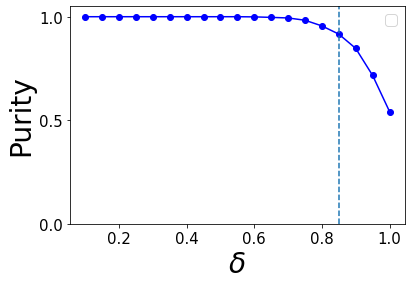}
    \end{subfigure}
\label{fig:app:imagenet50_delta}
\end{subfigure}
\hspace{0.3cm}
\begin{subfigure}{.31\textwidth}

    \center
    \includegraphics[width=\linewidth]{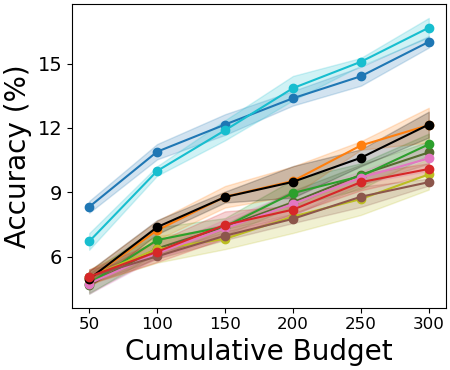}
\label{fig:app:imagenet50_supervised}
\end{subfigure}
\hspace{0.3cm}
\begin{subfigure}{.31\textwidth}

    \center
    \includegraphics[width=\linewidth]{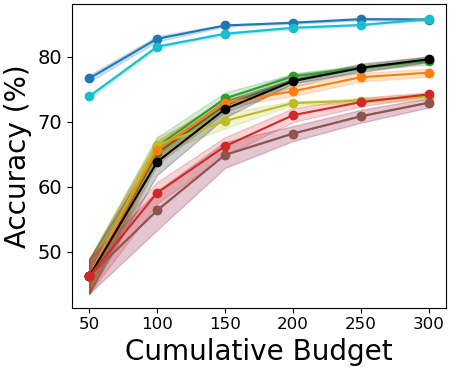}
\label{fig:app:imagenet50_1nn}
\end{subfigure}

\begin{subfigure}{.31\textwidth}
    \center
    \begin{subfigure}{\textwidth}
    \includegraphics[width=\linewidth]{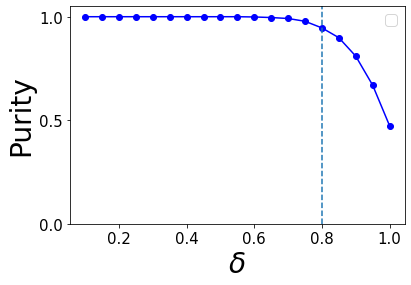}
    \end{subfigure}
\caption{}
\label{fig:app:imagenet100_delta}
\end{subfigure}
\hspace{0.3cm}
\begin{subfigure}{.31\textwidth}

    \center
    \includegraphics[width=\linewidth]{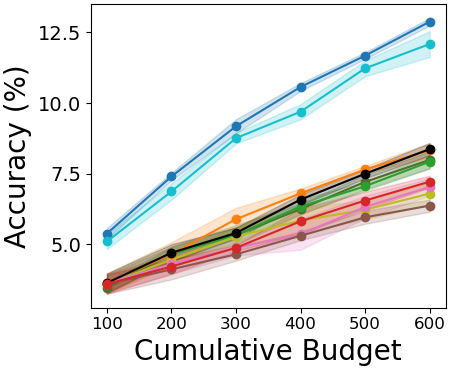}
\caption{}
\label{fig:app:imagenet100_supervised}
\end{subfigure}
\hspace{0.3cm}
\begin{subfigure}{.31\textwidth}

\center
\includegraphics[width=\linewidth]{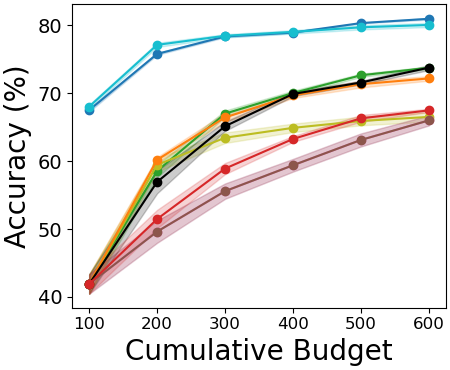}
\caption{}
\label{fig:app:imagenet100}
\end{subfigure}
\caption{A Comparative evaluation of \probcover\ on ImageNet-50 (top row) and ImageNet-100 (bottom row). (a) Similar to Fig~\ref{fig:finding_delta}, in which we estimate $\delta$. (b) Similar to Fig.~\ref{fig:fixed_delta_supervised_low_budget}, trained in the fully supervised framework. (c) Similar to Fig.~\ref{fig:fixed_delta_features_low_budget}, trained in the semi-supervised by transfer learning framework.}
\label{fig:app:imagenet50}
\end{figure}

When evaluating \probcover\ on ImageNet-50 and ImageNet-100, we report a similar qualitative behavior as seen in other datasets: \probcover\ performs better than all baselines in the very low-budget regime, using $5$ AL rounds with a budget equal to $b=50$ examples. More concretely, in Fig.~\ref{fig:app:imagenet50} we show results corresponding to Figs.~\ref{fig:finding_delta}-\ref{fig:fixed_delta_features_low_budget} when using ImageNet-50.

\subsection{\typiclust\ vs \probcover\ on SCAN feature space}
\label{app:probcover_on_scan}

Both \probcover\ and \typiclust\ use an unsupervised self-representation embedding as part of an active learning query selection algorithm. In 
Section~\ref{sec:main_results}, when comparing \probcover\ to \typiclust, we used the same embedding in both of them, to avoid possible confounds relating to the choice of the specific representation algorithm.

As \typiclust\ reached the best performance using SimCLR representation in most budgets and frameworks on CIFAR-10 and CIFAR-100, we chose that embedding space to compare to \probcover. However, in the fully-supervised framework, with a budget of $10$ examples, \typiclust\ yields better results using the embedding space of SCAN.

In Fig.~\ref{app:fig:scan_probcover}, we plot a comparison between \probcover\ and \typiclust\ in this budget, when both are using the embedding space of SCAN. We find a similar trend to the results reported in Section~\ref{sec:main_results}: \probcover\ achieves higher accuracy than \typiclust, and both surpass random sampling in this budget.

\begin{figure}[htb]
\centering
\begin{subfigure}{.31\textwidth}
    \centering
    \includegraphics[width=\linewidth]{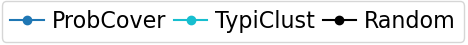}
    \includegraphics[width=\linewidth]{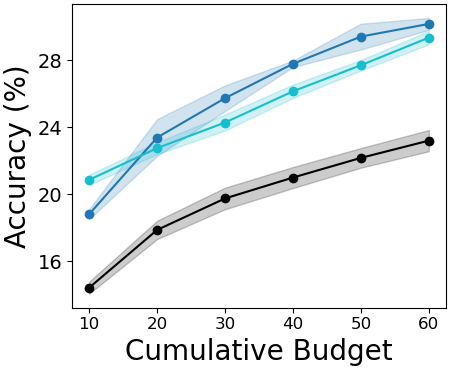}
\end{subfigure}
\caption{Comparison of \probcover\ and \typiclust\ using the SCAN feature space. ($\delta=0.8$)}
\label{app:fig:scan_probcover}
\end{figure}

\subsection{Comparison with W-Dist}
\label{app:comparison_to_mahmood}
In Section~\ref{sec:empirical_results}, we compare \probcover\ to several other active learning baselines, including W-Dist \citep{mahmood2021low}. As this method is computationally demanding, we are only able to evaluate its performance on the CIFAR-10 and CIFAR-100 datasets, which are the smallest datasets we consider.

We note that the results differ from the results reported in the original paper. This stems from several things: firstly, we use 1-NN classification instead of linear classification in the self-supervised scenario. Secondly, the implementation of the Wasserstein method is ours, based on the pseudo-code published in the original work, as no official implementation is available, though we did our best to follow the instructions of the original paper. Thirdly, the method is unique in that it requires a long time to select samples (the original version set a 3 hours timeout for the selection of every 10-20 samples). Instead of the long timeouts suggested in the original work, we used 20 minutes timeouts per round, which reached similar results.

\section{Time and Space complexity of \probcover}
\label{app:complexity_of_probcover}

During the training of a neural network, \probcover\ is executed a single time in order to select the best subset to query for human annotation for subsequent network training.

For the complexity calculation below, let $n$ denote the number of examples in the unlabeled and labeled pool $|U\cup L|$, $d$ the dimension of the data embedding space, and $b$ the query budget. \probcover\ can be split into two steps:

\subsection{Adjacency graph}
Constructing the adjacency graph requires computing pairwise distances in the embedding space.

\textbf{Time Complexity}: $O(n^2 d)$ time. In practice, it takes roughly 10 minutes on a single NVIDIA A4000 GPU even on the largest dataset we consider -- ImageNet with DINO embedding, where $n=1281167$, $d=384$.

\textbf{Space Complexity}: naively we have $O(n^2)$, which is impractical for large datasets like ImageNet. However, we only need to save edges whose distance is smaller than $\delta$. We  store the edges using a sparse matrix in coordinate list (COO) format, so the space complexity is $O(|E|)$, where $E$ is the set of edges in the graph.

Although $O(|E|)$ is still $O(n^2)$ in the worst case, in practice, the average degree of each vertex in the graph using radius $\delta$ is a few orders of magnitude smaller than $n$, resulting in manageable space complexity. For example, When selecting samples from ImageNet with $\delta=0.55$, the average degree was $24$ and the algorithm total memory consumption was $12GB$.

\subsection{Sample selection}

We iteratively select samples from the current sparse graph, removing incoming edges to newly covered samples. 
We note that unlike the adjacency graph creation, the sample selection cannot be parallelized, as each selection step depends on the previous step.
    
\textbf{Time Complexity}

Breaking down the steps in the selection of a single sample: 

\setlist{nolistsep}
\begin{itemize}[noitemsep]
    \item Calculating node degrees -- $O(|E|)$ time.
    \item Finding node with a maximal degree -- $O(n)$ time.
    \item Removing covered points' incoming edges from the graph -- $O(|E|)$ time.
\end{itemize}
All in all, the complexity is $O(|E|+n)$ for selecting a sample, and $O(n^2 k)$ overall in the worst case. As we select more and more points, more edges are removed, making the selection of later samples faster.
In practice, thanks to the vectorization of these steps it takes roughly 15 minutes to select $k=1000$ samples from ImageNet on a single CPU, and a couple of seconds in CIFAR-10/100.

\end{document}